\newtheorem{theorem}{Theorem}
\title{Importance Weighted Autoencoders}
\author{
Yuri Burda, Roger Grosse \& Ruslan Salakhutdinov \\
Department of Computer Science\\
University of Toronto\\
Toronto, ON, Canada \\
\texttt{\{yburda,rgrosse,rsalakhu\}@cs.toronto.edu}
}
\newcommand{\figuresdir}[1]{{#1}}
\newcommand{\expect}{\mathbb{E}}
\newcommand{\kldiv}{\mathrm{D}_{\mathrm{KL}}}
\newcommand{\normal}{\mathcal{N}}
\newcommand{\zeroVec}{\mathbf{0}}
\newcommand{\identity}{\mathbf{I}}
\newcommand{\hid}{\mathbf{h}}
\newcommand{\hidLayer}[1]{\hid^{#1}}
\newcommand{\hidS}[1]{\hid_{#1}}
\newcommand{\params}{{\boldsymbol{\theta}}}
\newcommand{\obs}{\mathbf{x}}
\newcommand{\numLayers}{L}
\newcommand{\layerIdx}{\ell}
\newcommand{\pmfGen}{p}
\newcommand{\pmfRec}{q}
\newcommand{\varBound}{\mathcal{L}}
\newcommand{\numSamples}{k}
\newcommand{\numSamplesTwo}{m}
\newcommand{\sampleIdx}{i}
\newcommand{\sampleIdxTwo}{j}
\newcommand{\aux}{\boldsymbol{\epsilon}}
\newcommand{\auxLayer}[1]{\aux^{#1}}
\newcommand{\auxS}[1]{\aux_{#1}}
\newcommand{\isWeight}{w}
\newcommand{\isWeightS}[1]{\isWeight_{#1}}
\newcommand{\isBoundK}[1]{\mathcal{L}_{#1}}
\newcommand{\isWeightNormS}[1]{\widetilde{w_{#1}}}
\newcommand{\recMeanLayer}[1]{\boldsymbol{\mu}_{\pmfRec,#1}}
\newcommand{\recStdLayer}[1]{\boldsymbol{\sigma}_{\pmfRec,#1}}
\newcommand{\genMeanLayer}[1]{\boldsymbol{\mu}_{\pmfGen,#1}}
\newcommand{\genStdLayer}[1]{\boldsymbol{\sigma}_{\pmfGen,#1}}
\newcommand{\latentUnit}{u}
\newcommand{\normalMean}{\boldsymbol{\mu}}
\newcommand{\normalCov}{\boldsymbol{\Sigma}}
\newcommand{\indices}{I}
\begin{document}

\maketitle

\begin{abstract}
The variational autoencoder (VAE; \cite{vae}) is a recently proposed generative model pairing a top-down generative network with a bottom-up recognition network which approximates posterior inference. It typically makes strong assumptions about posterior inference, for instance that the posterior distribution is approximately factorial, and that its parameters can be approximated with nonlinear regression from the observations. As we show empirically, the VAE objective can lead to overly simplified representations which fail to use the network's entire modeling capacity. We present the importance weighted autoencoder (IWAE), a generative model with the same architecture as the VAE, but which uses a strictly tighter log-likelihood lower bound derived from importance weighting. In the IWAE, the recognition network uses multiple samples to approximate the posterior, giving it increased flexibility to model complex posteriors which do not fit the VAE modeling assumptions. We show empirically that IWAEs learn richer latent space representations than VAEs, leading to improved test log-likelihood on density estimation benchmarks. 
\end{abstract}

\section{Introduciton}

In recent years, there has been a renewed focus on learning deep generative models \citep{greedy-dbn,dbm,DARN,vae,vae2}.
A common difficulty faced by most approaches is the need to perform posterior inference during training: the log-likelihood gradients for most latent variable models are defined in terms of posterior statistics (e.g.~\citet{dbm,deep-sigmoid,DARN}). One approach for dealing with this problem is to train a recognition network alongside the generative model \citep{helmholtz_machine}. The recognition network aims to predict the posterior distribution over latent variables given the observations, and can often generate a rough approximation much more quickly than generic inference algorithms such as MCMC. 

The variational autoencoder (VAE; \citet{vae,vae2}) is a recently proposed generative model which pairs a top-down generative network with a bottom-up recognition network. 
Both networks are jointly trained to maximize a variational lower bound on the data log-likelihood. 
VAEs have recently been successful at separating style and content \citep{vae-semi-supervised,tejas_vae} and at learning to ``draw'' images in a realistic manner \citep{draw}.

VAEs make strong assumptions about the posterior distribution. Typically VAE models assume that the posterior is approximately factorial, and that its parameters can be predicted from the observables through a nonlinear regression. Because they are trained to maximize a variational lower bound on the log-likelihood, they are encouraged to learn representations where these assumptions are satisfied, i.e.~where the posterior is approximately factorial and predictable with a neural network. While this effect is beneficial, it comes at a cost: constraining the form of the posterior limits the expressive power of the model. This is especially true of the VAE objective, which harshly penalizes approximate posterior samples which are unlikely to explain the data, even if the recognition network puts much of its probability mass on good explanations.

In this paper, we introduce the importance weighted autoencoder (IWAE), a generative model which shares the VAE architecture, but which is trained with a tighter log-likelihood lower bound derived from importance weighting. The recognition network generates multiple approximate posterior samples, and their weights are averaged. As the number of samples is increased, the lower bound approaches the true log-likelihood. The use of multiple samples gives the IWAE additional flexibility to learn generative models whose posterior distributions do not fit the VAE modeling assumptions. This approach is related to reweighted wake sleep \citep{RWS}, but the IWAE is trained using a single unified objective. 
Compared with the VAE, our IWAE is able to learn richer representations with more latent dimensions, which translates into significantly higher log-likelihoods on density estimation benchmarks.

\section{Background}
\label{sec:background}

In this section, we review the variational autoencoder (VAE) model of \citet{vae}. In particular, we describe a generalization of the architecture to multiple stochastic hidden layers. We note, however, that \citet{vae} used a single stochastic hidden layer, and there are other sensible generalizations to multiple layers, such as the one presented by \citet{vae2}.

The VAE defines a generative process in terms of ancestral sampling through a cascade of hidden layers:
\begin{equation}
\pmfGen(\obs|\params) = \sum_{\hidLayer{1},\ldots,\hidLayer{\numLayers}} \pmfGen(\hidLayer{\numLayers}|\params) \pmfGen(\hidLayer{\numLayers-1}|\hidLayer{\numLayers},\params)\cdots \pmfGen(\obs|\hidLayer{1},\params). 
\end{equation}
Here, $\params$ is a vector of parameters of the variational autoencoder, and $\hid = \{\hidLayer{1}, \ldots, \hidLayer{\numLayers}\}$ denotes the stochastic hidden units, or latent variables. The dependence on $\params$ is often suppressed for clarity. For convenience, we define $\hidLayer{0} = \obs$. Each of the terms $\pmfGen(\hidLayer{\layerIdx} | \hidLayer{\layerIdx+1})$ may denote a complicated nonlinear relationship, for instance one computed by a multilayer neural network. However, it is assumed that sampling and probability evaluation are tractable for each $\pmfGen(\hidLayer{\layerIdx} | \hidLayer{\layerIdx+1})$. Note that $\numLayers$ denotes the number of \emph{stochastic} hidden layers; the deterministic layers are not shown explicitly here.
We assume the recognition model $\pmfRec(\hid|\obs)$ is defined in terms of an analogous factorization: 
\begin{equation}
\pmfRec(\hid|\obs)=q(\hidLayer{1}|\obs)q(\hidLayer{2}|\hidLayer{1})\cdots \pmfRec(\hidLayer{\numLayers}|\hidLayer{\numLayers-1}),
\end{equation}
where sampling and probability evaluation are tractable for each of the terms in the product.

In this work, we assume the same families of conditional probability distributions as \citet{vae}. In particular, the prior $\pmfGen(\hidLayer{\numLayers})$ is fixed to be a zero-mean, unit-variance Gaussian. In general, each of the conditional distributions $\pmfGen(\hidLayer{\layerIdx} |\ \hidLayer{\layerIdx+1})$ and $\pmfRec(\hidLayer{\layerIdx} | \hidLayer{\layerIdx-1})$ is a Gaussian with diagonal covariance, where the mean and covariance parameters are computed by a deterministic feed-forward neural network. For real-valued observations, $\pmfGen(\obs | \hidLayer{1})$ is also defined to be such a Gaussian; for binary observations, it is defined to be a Bernoulli distribution whose mean parameters are computed by a neural network.

The VAE is trained to maximize a variational lower bound on the log-likelihood, as derived from Jensen's Inequality:
\begin{equation}
\log \pmfGen(\obs) = \log \expect_{\pmfRec(\hid|\obs)}\left[\frac{\pmfGen(\obs,\hid)}{\pmfRec(\hid|\obs)}\right]\geq \expect_{\pmfRec(\hid|\obs)}\left[\log\frac{\pmfGen(\obs,\hid)}{\pmfRec(\hid|\obs)}\right] = \varBound(\obs). \label{eqn:vae_bound}
\end{equation}
Since $\varBound(\obs) = \log \pmfGen(\obs) - \kldiv(\pmfRec(\hid|\obs)||\pmfGen(\hid|\obs)),$ the training procedure is forced to trade off the data log-likelihood $\log \pmfGen(\obs)$ and the KL divergence from the true posterior. This is beneficial, in that it encourages the model to learn a representation where posterior inference is easy to approximate.

If one computes the log-likelihood gradient for the recognition network directly from Eqn.~\ref{eqn:vae_bound}, the result is a REINFORCE-like update rule which trains slowly because it does not use the log-likelihood gradients with respect to latent variables \citep{helmholtz_machine,nvil}. Instead, \citet{vae} proposed a reparameterization of the recognition distribution in terms of auxiliary variables with fixed distributions, such that the samples from the recognition model are a deterministic function of the inputs and auxiliary variables. While they presented the reparameterization trick for a variety of distributions, for convenience we discuss the special case of Gaussians, since that is all we require in this work. (The general reparameterization trick can be used with our IWAE as well.)

In this paper, the recognition distribution $\pmfRec(\hidLayer{\layerIdx} | \hidLayer{\layerIdx-1}, \params)$ always takes the form of a Gaussian $\normal(\hidLayer{\layerIdx} | \normalMean(\hidLayer{\layerIdx-1}, \params), \normalCov(\hidLayer{\layerIdx-1}, \params))$, whose mean and covariance are computed from the 
the states of the hidden units at the previous layer and the model parameters. This can be alternatively expressed by first sampling an auxiliary variable $\auxLayer{\layerIdx} \sim \normal(\zeroVec, \identity)$, and then applying the deterministic mapping 
\begin{equation}
\hidLayer{\layerIdx}(\auxLayer{\layerIdx}, \hidLayer{\layerIdx-1}, \params) = \normalCov(\hidLayer{\layerIdx-1}, \params)^{1/2} \auxLayer{\layerIdx} + \normalMean(\hidLayer{\layerIdx-1}, \params). \label{eqn:layer-mapping}
\end{equation}
The joint recognition distribution $\pmfRec(\hid | \obs, \params)$ over all latent variables can be expressed in terms of a deterministic mapping $\hid(\aux, \obs, \params)$, with $\aux = (\auxLayer{1}, \ldots, \auxLayer{\numLayers})$, by applying Eqn.~\ref{eqn:layer-mapping} for each layer in sequence.
Since the distribution of $\aux$ does not depend on $\params$, we can reformulate the gradient of the bound $\varBound(\obs)$ from Eqn.~\ref{eqn:vae_bound} by pushing the gradient operator inside the expectation:
\begin{align}
\nabla_{\params} \log \expect_{\hid\sim \pmfRec(\hid|\obs,\params)} \left[ \frac{\pmfGen(\obs,\hid|\params)}{\pmfRec(\hid|\obs,\params)} \right] 
&= \nabla_{\params} \expect_{\auxLayer{1}, \ldots, \auxLayer{\numLayers}\sim \normal(\zeroVec, \identity)} \left[ \log \frac{\pmfGen(\obs,\hid(\aux, \obs, \params)|\params)}{\pmfRec(\hid(\aux, \obs, \params)|\obs,\params)} \right] \\
&= \expect_{\auxLayer{1}, \ldots, \auxLayer{\numLayers} \sim \normal(\zeroVec, \identity)} \left[ \nabla_{\params}\log \frac{\pmfGen(\obs,\hid(\aux, \obs, \params)|\params)}{\pmfRec(\hid(\aux, \obs, \params)|\obs,\params)} \right]. \label{eqn:vae_reparameterization}
\end{align}
Assuming the mapping $\hid$ is represented as a deterministic feed-forward neural network, for a fixed $\aux$, the gradient inside the expectation can be computed using standard backpropagation. In practice, one approximates the expectation in Eqn.~\ref{eqn:vae_reparameterization} by generating $\numSamples$ samples of $\aux$ and applying the Monte Carlo estimator
\begin{equation}
\label{eqn:ksample_var_grad}
\frac{1}{\numSamples}\sum_{\sampleIdx=1}^\numSamples\nabla_{\params}\log \isWeight \left(\obs, \hid(\auxS{\sampleIdx}, \obs, \params), \params\right)
\end{equation}
with $\isWeight(\obs,\hid,\params)= \pmfGen(\obs,\hid|\params) / \pmfRec(\hid|\obs,\params)$. This is an unbiased estimate of $\nabla_\params \varBound(\obs)$. We note that the VAE update and the basic REINFORCE-like update are both unbiased estimators of the same gradient, but the VAE update tends to have lower variance in practice because it makes use of the log-likelihood gradients with respect to the latent variables.

\section{Importance Weighted Autoencoder}
\label{sec:iwae}

The VAE objective of Eqn.~\ref{eqn:vae_bound} heavily penalizes approximate posterior samples which fail to explain the observations. This places a strong constraint on the model, since the variational assumptions must be approximately satisfied in order to achieve a good lower bound. In particular, the posterior distribution must be approximately factorial and predictable with a feed-forward neural network. This VAE criterion may be too strict; a recognition network which places only a small fraction (e.g.~20\%) of its samples in the region of high posterior probability region may still be sufficient for performing accurate inference. If we lower our standards in this way, this may give us additional flexibility to train a generative network whose posterior distributions do not fit the VAE assumptions. This is the motivation behind our proposed algorithm, the Importance Weighted Autoencoder (IWAE).

Our IWAE uses the same architecture as the VAE, with both a generative network and a recognition network. The difference is that it is trained to maximize a different lower bound on $\log \pmfGen(\obs)$. In particular, we use the following lower bound, corresponding to the $\numSamples$-sample importance weighting estimate of the log-likelihood:
\begin{equation}
\isBoundK{\numSamples}(\obs)=\expect_{\hidS{1},\ldots,\hidS{\numSamples} \sim \pmfRec(\hid | \obs)} \left[ \log \frac{1}{\numSamples} \sum_{\sampleIdx=1}^\numSamples \frac{\pmfGen(\obs,\hidS{\sampleIdx})}{\pmfRec(\hidS{\sampleIdx}|\obs)} \right]. \label{eqn:is_bound}
\end{equation}
Here, $\hidS{1}, \ldots, \hidS{\numSamples}$ are sampled independently from the recognition model. The term inside the sum corresponds to the unnormalized importance weights for the joint distribution, which we will denote as $\isWeightS{\sampleIdx}=\pmfGen(\obs,\hidS{\sampleIdx}) / \pmfRec(\hidS{\sampleIdx}|\obs)$.

This is a lower bound on the marginal log-likelihood, as follows from Jensen's Inequality and the fact that the average importance weights are an unbiased estimator of $\pmfGen(\obs)$:
\begin{equation}
\isBoundK{\numSamples}=\expect \left[ \log \frac{1}{\numSamples} \sum_{\sampleIdx=1}^\numSamples \isWeightS{\sampleIdx} \right] \leq \log \expect \left[ \frac{1}{\numSamples} \sum_{\sampleIdx=1}^\numSamples \isWeightS{\sampleIdx} \right] =\log \pmfGen(\obs),
\end{equation}
where the expectations are with respect to $\pmfRec(\hid | \obs)$.

It is perhaps unintuitive that importance weighting would be a reasonable estimator in high dimensions. Observe, however, that the special case of $\numSamples=1$ is equivalent to the standard VAE objective shown in Eqn.~\ref{eqn:vae_bound}. Using more samples can only improve the tightness of the bound:
\begin{theorem}
For all $\numSamples$, the lower bounds satisfy
\begin{equation}
\log \pmfGen(\obs) \geq \isBoundK{\numSamples + 1} \geq \isBoundK{\numSamples}.
\end{equation}
Moreover, if $\pmfGen(\hid, \obs) / \pmfRec(\hid | \obs)$ is bounded, then $\isBoundK{\numSamples}$ approaches $\log \pmfGen(\obs)$ as $\numSamples$ goes to infinity.
\end{theorem}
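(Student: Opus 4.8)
The plan is to prove the three assertions in turn. The inequality $\log\pmfGen(\obs)\geq\isBoundK{\numSamples}$ is exactly what the Jensen's-inequality argument just above the theorem establishes, so nothing new is needed there; I focus on the monotonicity $\isBoundK{\numSamples+1}\geq\isBoundK{\numSamples}$ and, under the boundedness hypothesis, the convergence $\isBoundK{\numSamples}\to\log\pmfGen(\obs)$, treating each with a self-contained argument.

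For monotonicity the plan is a subsampling trick. Sample $\isWeightS{1},\dots,\isWeightS{\numSamples+1}$ i.i.d.\ from $\pmfRec(\hid|\obs)$, and independently let $\indices=\{i_1,\dots,i_{\numSamples}\}$ be a uniformly chosen size-$\numSamples$ subset of $\{1,\dots,\numSamples+1\}$. Since each index lies in $\indices$ with probability $\numSamples/(\numSamples+1)$, one has the identity
\begin{equation}
\frac{1}{\numSamples+1}\sum_{\sampleIdx=1}^{\numSamples+1}\isWeightS{\sampleIdx}
\;=\;\expect_{\indices}\!\left[\,\frac{1}{\numSamples}\sum_{\sampleIdx\in\indices}\isWeightS{\sampleIdx}\right].
\end{equation}
Applying $\log$ together with Jensen's inequality in the variable $\indices$ (concavity of $\log$), then taking the expectation over the weights and using that for each fixed $\indices$ the subcollection $(\isWeightS{\sampleIdx})_{\sampleIdx\in\indices}$ has the same law as $(\isWeightS{1},\dots,\isWeightS{\numSamples})$, gives $\isBoundK{\numSamples+1}\geq\isBoundK{\numSamples}$.

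For the limit, write $\isWeight=\pmfGen(\obs,\hid)/\pmfRec(\hid|\obs)$ and suppose $\isWeight\leq M$ almost surely. Since $\expect_{\pmfRec(\hid|\obs)}[\isWeight]=\pmfGen(\obs)$, the strong law of large numbers gives $\frac{1}{\numSamples}\sum_{\sampleIdx=1}^{\numSamples}\isWeightS{\sampleIdx}\to\pmfGen(\obs)$ a.s., hence $\log\big(\frac{1}{\numSamples}\sum_{\sampleIdx}\isWeightS{\sampleIdx}\big)\to\log\pmfGen(\obs)$ a.s.; the remaining task is to pass this limit through the expectation defining $\isBoundK{\numSamples}$. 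One side is trivial: $\log\big(\frac{1}{\numSamples}\sum_{\sampleIdx}\isWeightS{\sampleIdx}\big)\leq\log M$, so this even reproves $\limsup_{\numSamples}\isBoundK{\numSamples}\leq\log\pmfGen(\obs)$. The delicate side is the left tail of the logarithm, and here I would invoke concavity of $\log$ a second time, $\log\big(\frac{1}{\numSamples}\sum_{\sampleIdx}\isWeightS{\sampleIdx}\big)\geq\frac{1}{\numSamples}\sum_{\sampleIdx}\log\isWeightS{\sampleIdx}$, so that the negative part is dominated by $\frac{1}{\numSamples}\sum_{\sampleIdx}(\log\isWeightS{\sampleIdx})^{-}$. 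Assuming $\isBoundK{1}>-\infty$, equivalently $\expect_{\pmfRec(\hid|\obs)}[(\log\isWeight)^{-}]<\infty$ (a mild condition that holds in every case of interest and is in any event needed for the bounds to be finite), these averages converge a.s.\ and have constant expectation equal to that of their a.s.\ limit, so by Scheff\'e's lemma they converge in $L^1$ and are uniformly integrable; hence $\{(\log\frac{1}{\numSamples}\sum_{\sampleIdx}\isWeightS{\sampleIdx})^{-}\}_{\numSamples}$ is uniformly integrable, and a.s.\ convergence plus uniform integrability (together with bounded convergence for the positive part) yields $\isBoundK{\numSamples}\to\log\pmfGen(\obs)$.

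I expect the interchange of limit and expectation in this last step to be the only genuine obstacle; the $\leq\log\pmfGen(\obs)$ bound is already in hand and the monotonicity is two lines. An equivalent but slightly tidier packaging of the convergence step is to normalize first, writing $\log\pmfGen(\obs)-\isBoundK{\numSamples}=\expect\big[-\log\big(\tfrac{1}{\numSamples}\sum_{\sampleIdx}\isWeightS{\sampleIdx}/\pmfGen(\obs)\big)\big]$ so the averaged quantities have mean $1$, then showing $\expect[(-\log S_{\numSamples})^{+}]\to 0$ by exactly the uniform-integrability argument above while $\expect[(-\log S_{\numSamples})^{-}]\to 0$ by bounded convergence.
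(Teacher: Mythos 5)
Your argument is correct, and for the first two claims it is the paper's own argument: the bound $\log \pmfGen(\obs) \geq \isBoundK{\numSamples}$ is the same application of Jensen's inequality, and your subsampling identity over a uniformly chosen size-$\numSamples$ subset of $\{1,\ldots,\numSamples+1\}$ is exactly the observation used in Appendix~A (stated there slightly more generally, for $\isBoundK{\numSamples}\geq\isBoundK{\numSamplesTwo}$ with $\numSamples\geq\numSamplesTwo$). Where you genuinely diverge is the limit. The paper's proof of that part is a single sentence: it deduces almost-sure convergence of the average of the weights to $\pmfGen(\obs)$ from the strong law of large numbers and then asserts that the expectation of the logarithm converges, without justifying the interchange of limit and expectation --- which is precisely the delicate step, since the logarithm is unbounded below. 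Your treatment supplies the missing justification: bounded convergence for the positive part (using $\isWeight\leq M$), and uniform integrability of the negative part obtained by dominating $-\log$ of the average by the average of $(\log\isWeightS{\sampleIdx})^{-}$ and applying Scheff\'e's lemma. The cost is the additional hypothesis $\isBoundK{1}>-\infty$, equivalently $\expect[(\log\isWeight)^{-}]<\infty$, which the theorem as stated does not impose (boundedness above does not imply it) but which, as you note, is needed for the bounds under comparison to be finite in the first place. With that caveat made explicit, your proof is complete and strictly more rigorous than the paper's on the convergence claim.
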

\begin{proof}
See Appendix A.
\end{proof}

The bound $\isBoundK{\numSamples}$ can be estimated using the straightforward Monte Carlo estimator, where we generate samples from the recognition network and average the importance weights. One might worry about the variance of this estimator, since importance weighting famously suffers from extremely high variance in cases where the proposal and target distributions are not a good match. However, as our estimator is based on the \emph{log} of the average importance weights, it does not suffer from high variance. This argument is made more precise in Appendix B.

\subsection{Training procedure}

To train an IWAE with a stochastic gradient based optimizer, we use an unbiased estimate of the gradient of $\isBoundK{\numSamples}$, defined in Eqn.~\ref{eqn:is_bound}. As with the VAE, we use the reparameterization trick to derive a low-variance upate rule:
\begin{small}
\begin{align}
\nabla_\params \isBoundK{\numSamples}(\obs) = \nabla_\params \expect_{\hidS{1},\ldots,\hidS{\numSamples}} \left[ \log \frac{1}{\numSamples} \sum_{\sampleIdx=1}^\numSamples \isWeightS{\sampleIdx} \right] 
&= \nabla_\params \expect_{\auxS{1},\ldots,\auxS{\numSamples}} \left[ \log \frac{1}{\numSamples} \sum_{\sampleIdx=1}^\numSamples \isWeight(\obs, \hid(\obs, \auxS{\sampleIdx}, \params), \params) \right] \\
&= \expect_{\auxS{1},\ldots,\auxS{\numSamples}} \left[ \nabla_\params \log \frac{1}{\numSamples} \sum_{\sampleIdx=1}^\numSamples \isWeight(\obs, \hid(\obs, \auxS{\sampleIdx}, \params), \params) \right] \\
&= \expect_{\auxS{1},\ldots,\auxS{\numSamples}} \left[ \sum_{\sampleIdx=1}^\numSamples \isWeightNormS{\sampleIdx} \nabla_\params \log \isWeight(\obs, \hid(\obs, \auxS{\sampleIdx}, \params), \params) \right], \label{eqn:iwae_gradient}
\end{align}
\end{small}

\noindent where $\auxS{1},\ldots,\auxS{\numSamples}$ are the same auxiliary variables as defined in Section~\ref{sec:background} for the VAE, $\isWeightS{\sampleIdx}=\isWeight(\obs,\hid(\obs, \auxS{\sampleIdx}, \params),\params)$ are the importance weights expressed as a deterministic function, and $\isWeightNormS{\sampleIdx} = \isWeightS{\sampleIdx}/\sum_{\sampleIdx=1}^\numSamples \isWeightS{\sampleIdx}$ are the normalized importance weights.

In the context of a gradient-based learning algorithm, we draw $\numSamples$ samples from the recognition network (or, equivalently, $\numSamples$ sets of auxiliary variables), and use the Monte Carlo estimate of Eqn.~\ref{eqn:iwae_gradient}:
\begin{equation}
\label{eqn:grad_ksample_objective}
\sum_{\sampleIdx=1}^\numSamples \isWeightNormS{\sampleIdx} \nabla_{\params}\log \isWeight\left(\obs, \hid(\auxS{\sampleIdx}, \obs, \params), \params\right).
\end{equation}
In the special case of $\numSamples = 1$, the single normalized weight $\isWeightNormS{1}$ takes the value 1, and one obtains the VAE update rule.

We unpack this update because it does not quite parallel that of the standard VAE.\footnote{\citet{vae} separated out the KL divergence in the bound of Eqn.~\ref{eqn:vae_bound} in order to achieve a simpler and lower-variance update. Unfortunately, no analogous trick applies for $\numSamples > 1$. In principle, the IWAE updates may be higher variance for this reason. However, in our experiments, we observed that the performance of the two update rules was indistinguishable in the case of $\numSamples=1$.} The gradient of the log weights decomposes as:
\begin{equation}
\nabla_\params \log \isWeight(\obs, \hid(\obs, \auxS{\sampleIdx}, \params), \params) = \nabla_\params \log \pmfGen(\obs,\hid(\obs, \auxS{\sampleIdx}, \params)|\params) - \nabla_\params \log \pmfRec(\hid(\obs, \auxS{\sampleIdx}, \params)|\obs,\params).
\end{equation}
The first term encourages the generative model to assign high probability to each $\hidLayer{\layerIdx}$ given $\hidLayer{\layerIdx+1}$ (following the convention that $\obs = \hidLayer{0}$). It also encourages the recognition network to adjust the hidden representations so that the generative network makes better predictions. In the case of a single stochastic layer (i.e.~$\numLayers=1$), the combination of these two effects is equivalent to backpropagation in a stochastic autoencoder. The second term of this update encourages the recognition network to have a spread-out distribution over predictions. This update is averaged over the samples with weight proportional to the importance weights, motivating the name ``importance weighted autoencoder.'' 

The dominant computational cost in IWAE training is computing the activations and parameter gradients needed for $\nabla_\params \log \isWeight(\obs, \hid(\obs, \auxS{\sampleIdx}, \params), \params)$. This corresponds to the forward and backward passes in backpropagation. In the basic IWAE implementation, both passes must be done independently for each of the $\numSamples$ samples. Therefore, the number of operations scales linearly with $\numSamples$. In our GPU-based implementation, the samples are processed in parallel by replicating each training example $\numSamples$ times within a mini-batch. 

One can greatly reduce the computational cost by adding another form of stochasticity. Specifically, only the forward pass is needed to compute the importance weights. The sum in Eqn.~\ref{eqn:grad_ksample_objective} can be stochastically approximated by choosing a single sample $\auxS{\sampleIdx}$ proprtional to its normalized weight $\isWeightNormS{\sampleIdx}$ and then computing $\nabla_\params \log \isWeight(\obs, \hid(\obs, \auxS{\sampleIdx}, \params), \params)$. This method requires $\numSamples$ forward passes and one backward pass per training example. Since the backward pass requires roughly twice as many add-multiply operations as the forward pass, for large $\numSamples$, this trick reduces the number of add-multiply operations by roughly a factor of 3. This comes at the cost of increased variance in the updates, but empirically we have found the tradeoff to be favorable.

\section{Related work}
\label{sec:related-work}

There are several broad families of approaches to training deep generative models. Some models are defined in terms of Boltzmann distributions \citep{rbm,dbm}. This has the advantage that many of the conditional distributions are tractable, but the inability to sample from the model or compute the partition function has been a major roadblock \citep{ais-rbm}. Other models are defined in terms of belief networks \citep{deep-sigmoid,DARN}. These models are tractable to sample from, but the conditional distributions become tangled due to the explaining away effect. 

One strategy for dealing with intractable posterior inference is to train a recognition network which approximates the posterior. A classic approach was the wake-sleep algorithm, used to train Helmholtz machines \citep{helmholtz_machine}. The generative model was trained to model the conditionals inferred by the recognition net, and the recognition net was trained to explain synthetic data generated by the generative net. Unfortunately, wake-sleep trained the two networks on different objective functions. Deep autoregressive networks \citep{DARN} consisted of deep generative and recognition networks trained using a single variational lower bound. Neural variational inference and learning \citep{nvil} is another algorithm for training recognition networks which reduces stochasticity in the updates by training a third network to predict reward baselines in the context of the REINFORCE algorithm \citep{reinforce}. \citet{dbm-recognition} used a recognition network to approximate the posterior distribution in deep Boltzmann machines.

Variational autoencoders \citep{vae,vae2}, as described in detail in Section~\ref{sec:background}, are another combination of generative and recognition networks, trained with the same variational objective as DARN and NVIL. However, in place of REINFORCE, they reduce the variance of the updates through a clever reparameterization of the random choices. The reparameterization trick is also known as ``backprop through a random number generator'' \citep{reinforce}.

One factor distinguishing VAEs from the other models described above is that the model is described in terms of a simple distribution followed by a deterministic mapping, rather than a sequence of stochastic choices. Similar architectures have been proposed which use different training objectives. Generative adversarial networks \citep{generative_adversarial} train a generative network and a recognition network which act in opposition: the recognition network attempts to distinguish between training examples and generated samples, and the generative model tries to generate samples which fool the recognition network. Maximum mean discrepancy (MMD) networks \citep{yujia_mmd,karolina_mmd} attempt to generate samples which match a certain set of statistics of the training data. They can be viewed as a kind of adversarial net where the adversary simply looks at the set of pre-chosen statistics \citep{karolina_mmd}. In contrast to VAEs, the training criteria for adversarial nets and MMD nets are not based on the data log-likelihood.

Other researchers have derived log-probability lower bounds by way of importance sampling. \citet{charlie-stochastic} and \citet{jimmy-attention} avoided recognition networks entirely, instead performing inference using importance sampling from the prior. \citet{dechter-importance} presented a variety of graphical model inference algorithms based on importance weighting. Reweighted wake-sleep (RWS) of \cite{RWS} is another recognition network approach which combines the original wake-sleep algorithm with updates to the generative network equivalent to gradient ascent on our bound $\isBoundK{\numSamples}$. However, \citet{RWS} interpret this update as following a biased estimate of $\nabla_\params \log \pmfGen(\obs)$, whereas we interpret it as following an unbiased estimate of $\nabla_\params \isBoundK{\numSamples}$. The IWAE also differs from RWS in that the generative and recognition networks are trained to maximize a single objective, $\isBoundK{\numSamples}$. By contrast, the $q$-wake and sleep steps of RWS do not appear to be related to $\isBoundK{\numSamples}$. Finally, the IWAE differs from RWS in that it makes use of the reparameterization trick.

Apart from our approach of using multiple approximate posterior samples, another way to improve the flexibility of posterior inference is to use a more sophisticated algorithm than importance sampling. Examples of this approach include normalizing flows \citep{normalizing_flows} and the Hamiltonian variational approximation of \citet{bridging_the_gap}. 

After the publication of this paper the authors learned that the idea of using an importance weighted lower bound for training variational autoencoders has been independently explored by Laurent Dinh and Vincent Dumoulin, and preliminary results of their work were presented at the 2014 CIFAR NCAP Deep Learning summer school.

\section{Experimental results}
\label{sec:experiments}

We have compared the generative performance of the VAE and IWAE in terms of their held-out log-likelihoods on two density estimation benchmark datasets. We have further investigated a particular issue we have observed with VAEs and IWAEs, namely that they learn latent spaces of significantly lower dimensionality than the modeling capacity they are allowed. We tested whether the IWAE training method ameliorates this effect.

\subsection{Evaluation on density estimation}

We evaluated the models on two benchmark datasets: MNIST, a dataset of images of handwritten digits \citep{mnist}, and Omniglot, a dataset of handwritten characters in a variety of world alphabets \citep{omniglot}. In both cases, the observations were binarized $28 \times 28$ images.\footnote{Unfortunately, the generative modeling literature is inconsistent about the method of binarization, and different choices can lead to considerably different log-likelihood values. We follow the procedure of \citet{ais-rbm}: the binary-valued observations are sampled with expectations equal to the real values in the training set. See Appendix D for an alternative binarization scheme.} We used the standard splits of MNIST into 60,000 training and 10,000 test examples, and of Omniglot into 24,345 training and 8,070 test examples.

We trained models with two architectures:
\begin{enumerate}
\item An architecture with a single stochastic layer $\hidLayer{1}$ with 50 units. In between the observations and the stochastic layer were two deterministic layers, each with 200 units.
\item An architecture with two stochastic layers $\hidLayer{1}$ and $\hidLayer{2}$, with 100 and 50 units, respectively. In between $\obs$ and $\hidLayer{1}$ were two deterministic layers with 200 units each. In between $\hidLayer{1}$ and $\hidLayer{2}$ were two deterministic layers with 100 units each.
\end{enumerate}
All deterministic hidden units used the $\tanh$ nonlinearity. All stochastic layers used Gaussian distributions with diagonal covariance, with the exception of the visible layer, which used Bernoulli distributions. An $\exp$ nonlinearity was applied to the predicted variances of the Gaussian distributions. The network architectures are summarized in Appendix C.

All models were initialized with the heuristic of \citet{initialization}. For optimization, we used Adam \citep{adam} with parameters $\beta_1=0.9,\beta_2=0.999,\epsilon=10^{-4}$ and minibaches of size $20$. The training proceeded for $3^i$ passes over the data with learning rate of $0.001\cdot 10^{-i/7}$ for $i=0\ldots 7$ (for a total of $\sum_{i=0}^7 3^i=3280$ passes over the data). This learning rate schedule was chosen based on preliminary experiments training a VAE with one stochastic layer on MNIST. 

For each number of samples $\numSamples\in \{1,5,50\}$ we trained a VAE with the gradient of $\varBound(\bf{x})$ estimted as in Eqn.~\ref{eqn:ksample_var_grad} and an IWAE with the gradient estimated as in Eqn.~\ref{eqn:grad_ksample_objective}. For each $\numSamples$, the VAE and the IWAE were trained for approximately the same length of time.

All log-likelihood values were estimated as the mean of $\isBoundK{5000}$ on the test set. Hence, the reported values are stochastic lower bounds on the true value, but are likely to be more accurate than the lower bounds used for training. 

The log-likelihood results are reported in Table~\ref{tbl:results}. Our VAE results are comparable to those previously reported in the literature. We observe that training a VAE with $\numSamples > 1$ helped only slightly. By contrast, using multiple samples improved the IWAE results considerably on both datasets. Note that the two algorithms are identical for $\numSamples=1$, so the results ought to match up to random variability. 

On MNIST, IWAE with two stochastic layers and $\numSamples=50$ achieves a log-likelihood of -82.90 on the permutation-invariant model on this dataset. By comparison, deep belief networks achieved log-likelihood of approximately -84.55 nats \citep{dbn_chib}, and deep autoregressive networks achieved log-likelihood of -84.13 nats \citep{DARN}. \citet{draw}, who exploited spatial structure, achieved a log-likelihood of -80.97. We did not find overfitting to be a serious issue for either the VAE or the IWAE: in both cases, the training log-likelihood was 0.62 to 0.79 nats higher than the test log-likelihood. We present samples from our models in Appendix E.

For the OMNIGLOT dataset, the best performing IWAE has log-likelihood of -103.38 nats, which is slightly worse than the log-likelihood of -100.46 nats achieved by a Restricted Boltzmann Machine with 500 hidden units trained with persistent contrastive divergence \citep{raise}. RBMs trained with centering or FANG methods achieve a similar performance of around -100 nats \citep{fang}. The training log-likelihood for the models we trained was 2.39 to 2.65 nats higher than the test log-likelihood.

\begin{table}
\vspace{-0.3in}
\begin{center}
\begin{tabular}{@{}ll cc cc cc cc cc cc@{}}
              &     & \multicolumn{4}{c}{MNIST} & \multicolumn{4}{c}{OMNIGLOT} \\
\cmidrule(r){3-6} \cmidrule(l){7-10} 
              &     & \multicolumn{2}{c}{VAE} & \multicolumn{2}{c}{IWAE} & \multicolumn{2}{c}{VAE} & \multicolumn{2}{c}{IWAE} \\
\cmidrule(r){3-4} \cmidrule(l){5-6} \cmidrule(lr){7-8} \cmidrule(l){9-10} 
\shortstack{\# stoch. \\ layers} & $\numSamples$ & NLL               & \shortstack{active \\ units}             & NLL               & \shortstack{active \\ units}  & NLL                & \shortstack{active \\ units}   & NLL               & \shortstack{active \\ units}  \\ 
\cmidrule(lr){1-1} \cmidrule(lr){2-2} \cmidrule(lr){3-3}\cmidrule(lr){4-4}\cmidrule(lr){5-5}\cmidrule(lr){6-6} \cmidrule(lr){7-7}\cmidrule(lr){8-8}\cmidrule(lr){9-9}\cmidrule(lr){10-10}\cmidrule(lr){11-11}      \midrule
1             & 1   & 86.76 &  19  & 86.76&  19  & 108.11 & 28 & 108.11 & 28 \\
              & 5   & 86.47 &  20  & 85.54&  22  & 107.62 & 28 & 106.12 & 34 \\
              & 50  & 86.35 &  20  & 84.78&  25  & 107.80 & 28 & 104.67 & 41 \\ \midrule
2             & 1   & 85.33 &  16+5 & 85.33&  16+5& 107.58 &28+4& 107.56 & 30+5 \\
              & 5   & 85.01 &  17+5 & 83.89&  21+5& 106.31 &30+5& 104.79 & 38+6 \\
              & 50  & 84.78 &  17+5 & 82.90&  26+7& 106.30 &30+5& 103.38 & 44+7 \\ \bottomrule
\end{tabular}
\end{center}
\vspace{-0.1in}
\caption{\small Results on density estimation and the number of active latent dimensions. For models with two latent layers, ``$k_1+k_2$'' denotes $k_1$ active units in the first layer and $k_2$ in the second layer. The generative performance of IWAEs improved with increasing $\numSamples$, while that of VAEs benefitted only slightly. Two-layer models achieved better generative performance than one-layer models. }
\label{tbl:results}
\end{table}

\subsection{Latent space representation}
\label{sec:latent}

We have observed that both VAEs and IWAEs tend to learn latent representations with effective dimensions far below their capacity. Our next set of experiments aimed to quantify this effect and determine whether the IWAE objective ameliorates this effect. 




If a latent dimension encodes useful information about the data, we would expect its distribution to change depending on the observations. Based on this intuition, we measured activity of a latent dimension $\latentUnit$ using the statistic $A_\latentUnit = \mathrm{Cov}_{\obs} \left( \expect_{\latentUnit \sim \pmfRec(\latentUnit | \obs)} [\latentUnit] \right)$. We defined the dimension $\latentUnit$ to be active if $A_\latentUnit > 10^{-2}$. We have observed two pieces of evidence that this criterion is both well-defined and meaningful:
\begin{enumerate}
\item The distribution of $A_\latentUnit$ for a trained model consisted of two widely separated modes, as shown in Appendix C.
\item To confirm that the inactive dimensions were indeed insignificant to the predictions, we evaluated all models with the inactive dimensions removed. In all cases, this changed the test log-likelihood by less than $0.06$ nats.
\end{enumerate}

In Table \ref{tbl:results}, we report the numbers of active units for all conditions. In all conditions, the number of active dimensions was far less than the total number of dimensions. Adding more latent dimensions did not increase the number of active dimensions. Interestingly, in the two-layer models, the second layer used very little of its modeling capacity: the number of active dimensions was always less than~10. In all cases with $\numSamples > 1$, the IWAE learned more latent dimensions than the VAE. Since this coincided with higher log-likelihood values, we speculate that a larger number of active dimensions reflects a richer latent representation.

Superficially, the phenomenon of inactive dimensions appears similar to the problem of ``units dying out'' in neural networks and latent variable models, an effect which is often ascribed to difficulties in optimization. For example, if a unit is inactive, it may never receive a meaningful gradient signal because of a plateau in the optimization landscape. In such cases, the problem may be avoided through a better initialization. To determine whether the inactive units resulted from an optimization issue or a modeling issue, we took the best-performing VAE and IWAE models from Table~\ref{tbl:results}, and continued training the VAE model using the IWAE objective and vice versa. In both cases, the model was trained for an additional $3^7$ passes over the data with a learning rate of $10^{-4}$.

The results are shown in Table~\ref{tbl:optima}. We found that continuing to train the VAE with the IWAE objective increased the number of active dimensions and the test log-likelihood, while continuing to train the IWAE with the VAE objective did the opposite. The fact that training with the VAE objective actively reduces both the number of active dimensions and the log-likelihood strongly suggests that inactivation of the latent dimensions is driven by the objective functions rather than by optimization issues. On the other hand, optimization also appears to play a role, as the results in Table~\ref{tbl:optima} are not quite identical to those in Table~\ref{tbl:results}.

\begin{table}
\vspace{-0.3in}
\begin{center}
\begin{tabular}{@{}c ccc ccc@{}}
& \multicolumn{3}{c}{First stage} & \multicolumn{3}{c}{Second stage} \\
\cmidrule(r){2-4} \cmidrule(l){5-7} 
& trained as   & NLL   & active units & trained as             & NLL   & active units \\ 
\cmidrule(lr){2-2} \cmidrule(lr){3-3} \cmidrule(lr){4-4} \cmidrule(lr){5-5} \cmidrule(lr){6-6} \cmidrule(lr){7-7} \midrule
Experiment 1 & VAE          & 86.76 & 19           & IWAE, $\numSamples=50$           & 84.88 & 22           \\ \midrule
Experiment 2 & IWAE, $\numSamples=50$ & 84.78 & 25           & VAE                    & 86.02 & 23           \\
\bottomrule
\end{tabular}
\end{center}
\vspace{-0.1in}
\caption{\small Results of continuing to train a VAE model with the IWAE objective, and vice versa. Training the VAE with the IWAE objective increased the latent dimension and test log-likelihood, while training the IWAE with the VAE objective had the opposite effect.}
\label{tbl:optima}
\end{table}

\section{Conclusion}
\label{sec:conclusion}

In this paper, we presented the importance weighted autoencoder, a variant on the VAE trained by maximizing a tighter log-likelihood lower bound derived from importance weighting. We showed empirically that IWAEs learn richer latent representations and achieve better generative performance than VAEs with equivalent architectures and training time. We believe this method may improve the flexibility of other generative models currently trained with the VAE objective.

\section{Acknowledgements}
This research was supported by NSERC, the Fields Institute, 
and Samsung. 

\begin{small}
\bibliography{iwae}

\begin{thebibliography}{32}
\providecommand{\natexlab}[1]{#1}
\providecommand{\url}[1]{\texttt{#1}}
\expandafter\ifx\csname urlstyle\endcsname\relax
  \providecommand{\doi}[1]{doi: #1}\else
  \providecommand{\doi}{doi: \begingroup \urlstyle{rm}\Url}\fi

\bibitem[Ba et~al.(2015)Ba, Mnih, and Kavukcuoglu]{jimmy-attention}
Ba, J.~L., Mnih, V., and Kavukcuoglu, K.
\newblock Multiple object recognition with visual attention.
\newblock In \emph{International Conference on Learning Representations}, 2015.

\bibitem[Bornschein \& Bengio(2015)Bornschein and Bengio]{RWS}
Bornschein, J. and Bengio, Y.
\newblock Reweighted wake-sleep.
\newblock \emph{International Conference on Learning Representations}, 2015.

\bibitem[Burda et~al.(2015)Burda, Grosse, and Salakhutdinov]{raise}
Burda, Y., Grosse, R.~B., and Salakhutdinov, R.
\newblock Accurate and conservative estimates of {MRF} log-likelihood using
  reverse annealing.
\newblock \emph{Artificial Intelligence and Statistics}, pp.\  102--110, 2015.

\bibitem[Dayan et~al.(1995)Dayan, Hinton, Neal, and Zemel]{helmholtz_machine}
Dayan, P., Hinton, G.~E., Neal, R.~M., and Zemel, R.~S.
\newblock The {Helmholtz} machine.
\newblock \emph{Neural Computation}, 7:\penalty0 889--904, 1995.

\bibitem[Dziugaite et~al.(2015)Dziugaite, Roy, and Ghahramani]{karolina_mmd}
Dziugaite, K.~G., Roy, D.~M., and Ghahramani, Z.
\newblock Training generative neural networks via maximum mean discrepancy
  optimization.
\newblock In \emph{Uncertainty in Artificial Intelligence}, 2015.

\bibitem[Glorot \& Bengio(2010)Glorot and Bengio]{initialization}
Glorot, X. and Bengio, Y.
\newblock Understanding the difficulty of training deep feedforward neural
  networks.
\newblock In \emph{Artificial Intelligence and Statistics}, pp.\  249--256,
  2010.

\bibitem[Gogate et~al.(2007)Gogate, Bidyuk, and Dechter]{dechter-importance}
Gogate, V., Bidyuk, B., and Dechter, R.
\newblock Studies in lower bounding probability of evidence using the {Markov}
  inequality.
\newblock In \emph{Uncertainty in Artificial Intelligence}, 2007.

\bibitem[Goodfellow et~al.(2014)Goodfellow, Pouget-Abadie, Mirza, Xu,
  Warde-Farley, Ozair, Courville, and Bengio]{generative_adversarial}
Goodfellow, I., Pouget-Abadie, J., Mirza, M., Xu, B., Warde-Farley, D., Ozair,
  S., Courville, A., and Bengio, Y.
\newblock Generative adversarial nets.
\newblock In \emph{Neural Information Processing Systems}, 2014.

\bibitem[Gregor et~al.(2014)Gregor, Danihelka, Mnih, Blundell, and
  Wierstra]{DARN}
Gregor, K., Danihelka, I., Mnih, A., Blundell, C., and Wierstra, D.
\newblock Deep autoregressive networks.
\newblock \emph{International Conference on Machine Learning}, 2014.

\bibitem[Gregor et~al.(2015)Gregor, Danihelka, Graves, Rezende, and
  Wierstra]{draw}
Gregor, K., Danihelka, I., Graves, A., Rezende, D.~J., and Wierstra, D.
\newblock {DRAW}: A recurrent neural network for image generation.
\newblock In \emph{International Conference on Machine Learning}, pp.\
  1462--1471, 2015.

\bibitem[Grosse \& Salakhudinov(2015)Grosse and Salakhudinov]{fang}
Grosse, R. and Salakhudinov, R.
\newblock Scaling up natural gradient by sparsely factorizing the inverse
  fisher matrix.
\newblock In \emph{International Conference on Machine Learning}, 2015.

\bibitem[Hinton et~al.(2006)Hinton, Osindero, and Teh]{greedy-dbn}
Hinton, G.~E., Osindero, S., and Teh, Y.
\newblock A fast learning algorithm for deep belief nets.
\newblock \emph{Neural Computation}, 2006.

\bibitem[Kingma \& Ba(2015)Kingma and Ba]{adam}
Kingma, D. and Ba, J.~L.
\newblock Adam: A method for stochastic optimization.
\newblock In \emph{International Conference on Learning Representations}, 2015.

\bibitem[Kingma \& Welling(2014)Kingma and Welling]{vae}
Kingma, D.~P. and Welling, M.
\newblock {Auto-Encoding Variational Bayes}.
\newblock \emph{International Conference on Learning Representations}, 2014.

\bibitem[Kingma et~al.(2014)Kingma, Mohamed, Rezende, and
  Welling]{vae-semi-supervised}
Kingma, D.~P., Mohamed, S., Rezende, D.~J., and Welling, M.
\newblock Semi-supervised learning with deep generative models.
\newblock In \emph{Neural Information Processing Systems}, 2014.

\bibitem[Kulkarni et~al.(2015)Kulkarni, Whitney, Kohli, and
  Tenenbaum]{tejas_vae}
Kulkarni, T.~D., Whitney, W., Kohli, P., and Tenenbaum, J.~B.
\newblock Deep convolutional inverse graphics network.
\newblock arXiv:1503.03167, 2015.

\bibitem[Lake et~al.(2013)Lake, Salakhutdinov, and Tenenbaum]{omniglot}
Lake, B.~M., Salakhutdinov, R., and Tenenbaum, J.~B.
\newblock One-shot learning by inverting a compositional causal process.
\newblock In \emph{Neural Information Processing Systems}, 2013.

\bibitem[Larochelle(2011)]{binarization}
Larochelle, H., Murray~I.
\newblock The neural autoregressive distribution estimator.
\newblock In \emph{Artificial Intelligence and Statistics}, pp.\  29--37, 2011.

\bibitem[LeCun et~al.(1998)LeCun, Bottou, Bengio, and Haffner]{mnist}
LeCun, Y., Bottou, L., Bengio, Y., and Haffner, P.
\newblock {Gradient-based learning applied to document recognition}.
\newblock \emph{Proceedings of the IEEE}, 86\penalty0 (11):\penalty0
  2278--2324, 1998.

\bibitem[Li et~al.(2015)Li, Swersky, and Zemel]{yujia_mmd}
Li, Y., Swersky, K., and Zemel, R.
\newblock Generative moment matching networks.
\newblock In \emph{International Conference on Machine Learning}, pp.\
  1718--1727, 2015.

\bibitem[Mnih \& Gregor(2014)Mnih and Gregor]{nvil}
Mnih, A. and Gregor, K.
\newblock Neural variational inference and learning in belief networks.
\newblock In \emph{International Conference on Machine Learning}, pp.\
  1791--1799, 2014.

\bibitem[Murray \& Salakhutdinov(2009)Murray and Salakhutdinov]{dbn_chib}
Murray, I. and Salakhutdinov, R.
\newblock Evaluating probabilities under high-dimensional latent variable
  models.
\newblock In \emph{Neural Information Processing Systems}, pp.\  1137--1144,
  2009.

\bibitem[Neal(1992)]{deep-sigmoid}
Neal, R.~M.
\newblock Connectionist learning of belief networks.
\newblock \emph{Artificial Intelligence}, 1992.

\bibitem[Rezende \& Mohamed(2015)Rezende and Mohamed]{normalizing_flows}
Rezende, D.~J. and Mohamed, S.
\newblock Variational inference with normalizing flows.
\newblock In \emph{International Conference on Machine Learning}, pp.\
  1530--1538, 2015.

\bibitem[Rezende et~al.(2014)Rezende, Mohamed, and Wierstra]{vae2}
Rezende, D.~J., Mohamed, S., and Wierstra, D.
\newblock Stochastic backpropagation and approximate inference in deep
  generative models.
\newblock \emph{International Conference on Machine Learning}, pp.\
  1278--1286, 2014.

\bibitem[Salakhutdinov \& E.(2009)Salakhutdinov and E.]{dbm}
Salakhutdinov, R. and E., Hinton~G.
\newblock Deep {Boltzmann} machines.
\newblock In \emph{Neural Information Processing Systems}, 2009.

\bibitem[Salakhutdinov \& Larochelle(2010)Salakhutdinov and
  Larochelle]{dbm-recognition}
Salakhutdinov, R. and Larochelle, H.
\newblock Efficient learning of deep {Boltzmann} machines.
\newblock In \emph{Artificial Intelligence and Statistics}, 2010.

\bibitem[Salakhutdinov \& Murray(2008)Salakhutdinov and Murray]{ais-rbm}
Salakhutdinov, R. and Murray, I.
\newblock {On the quantitative analysis of deep belief networks}.
\newblock In \emph{International Conference on Machine Learning}, 2008.

\bibitem[Salimans et~al.(2015)Salimans, Kingma, and Welling]{bridging_the_gap}
Salimans, T., Kingma, D.~P., and Welling, M.
\newblock Markov chain {Monte Carlo} and variational inference: bridging the
  gap.
\newblock In \emph{International Conference on Machine Learning}, pp.\
  1218--1226, 2015.

\bibitem[Smolensky(1986)]{rbm}
Smolensky, P.
\newblock Information processing in dynamical systems: foundations of harmony
  theory.
\newblock In Rumelhart, D.~E. and McClelland, J.~L. (eds.), \emph{Parallel
  Distributed Processing: Explorations in the Microstructure of Cognition}. MIT
  Press, 1986.

\bibitem[Tang \& Salakhutdinov(2013)Tang and Salakhutdinov]{charlie-stochastic}
Tang, Y. and Salakhutdinov, R.
\newblock Learning stochastic feedforward neural networks.
\newblock In \emph{Neural Information Processing Systems}, 2013.

\bibitem[Williams(1992)]{reinforce}
Williams, R.~J.
\newblock Simple statistical gradient-following algorithms for connectionist
  reinforcement learning.
\newblock \emph{Machine Learning}, 8:\penalty0 229--256, 1992.

\end{thebibliography}
\bibliographystyle{iwae}
\end{small}

\section*{Appendix A}
\label{apx:proofs}


{\bf Proof of Theorem 1.} We need to show the following facts about the log-likelihood lower bound $\isBoundK{\numSamples}$:
 \begin{enumerate} 
 \item $\log \pmfGen(\obs) \geq \isBoundK{\numSamples}$,
 \item $\isBoundK{\numSamples}\geq \isBoundK{\numSamplesTwo}$ for $\numSamples\geq \numSamplesTwo$,
 \item $\log \pmfGen(\obs) = \lim_{\numSamples\to \infty} \isBoundK{\numSamples}$, assuming $\pmfGen(\hid, \obs) / \pmfRec(\hid | \obs)$ is bounded.
 \end{enumerate}
We prove each in turn:
\begin{enumerate}
\item
It follows from Jensen's inequality that
\begin{equation}
\isBoundK{\numSamples}=\expect\left[\log \frac{1}{\numSamples} \sum_{\sampleIdx=1}^\numSamples \frac{\pmfGen(\obs,\hidS{\sampleIdx})}{\pmfRec(\hidS{\sampleIdx}|\obs)}  \right]\leq \log \expect\left[\frac{1}{\numSamples} \sum_{\sampleIdx=1}^\numSamples \frac{\pmfGen(\obs,\hidS{\sampleIdx})}{\pmfRec(\hidS{\sampleIdx}|\obs)}  \right]=\log \pmfGen(\obs)
\end{equation}
\item Let $\indices\subset\{1,\ldots,\numSamples\}$ with $|\indices|=\numSamplesTwo$ be a uniformly distributed subset of distinct indices from $\{1,\ldots,\numSamples\}$. We will use the following simple observation: $\expect_{\indices=\{\sampleIdx_1,\ldots,\sampleIdx_\numSamplesTwo\}} \left[ \frac{a_{\sampleIdx_1}+\ldots+a_{\sampleIdx_\numSamplesTwo}}{\numSamplesTwo} \right] =\frac{a_1+\ldots+a_\numSamples}{\numSamples}$ for any sequence of numbers $a_1,\ldots,a_\numSamples$.

Using this observation and Jensen's inequality, we get
\begin{align}
\isBoundK{\numSamples} &= \expect_{\hidS{1},\ldots,\hidS{\numSamples}} \left[ \log \frac{1}{\numSamples} \sum_{\sampleIdx=1}^\numSamples \frac{\pmfGen(\obs,\hidS{\sampleIdx})}{\pmfRec(\hidS{\sampleIdx}|\obs)} \right]\\
&= \expect_{\hidS{1},\ldots,\hidS{\numSamples}} \left[ \log\expect_{\indices=\{\sampleIdx_1,\ldots,\sampleIdx_\numSamplesTwo\}}\left[\frac{1}{\numSamplesTwo} \sum_{\sampleIdxTwo=1}^\numSamplesTwo \frac{\pmfGen(\obs,\hidS{\sampleIdx_\sampleIdxTwo})}{\pmfRec(\hidS{\sampleIdx_\sampleIdxTwo}|\obs)} \right] \right] \\
& \geq \expect_{\hidS{1},\ldots,\hidS{\numSamples}} \left[ \expect_{\indices=\{\sampleIdx_1,\ldots,\sampleIdx_\numSamplesTwo\}} \left[ \log \frac{1}{\numSamplesTwo} \sum_{\sampleIdxTwo=1}^\numSamplesTwo \frac{\pmfGen(\obs,\hidS{\sampleIdx_\sampleIdxTwo})}{\pmfRec(\hidS{\sampleIdx_\sampleIdxTwo}|\obs)} \right] \right] \\
&=\expect_{\hidS{1},\ldots,\hidS{\numSamplesTwo}}\left[\log \frac{1}{\numSamplesTwo} \sum_{\sampleIdx=1}^\numSamplesTwo \frac{\pmfGen(\obs,\hidS{\sampleIdx})}{\pmfRec(\hidS{\sampleIdx}|\obs)} \right]=\isBoundK{\numSamplesTwo}
\end{align}
\item Consider the random variable 
$M_k = \frac{1}{\numSamples} \sum_{\sampleIdx=1}^\numSamples \frac{\pmfGen(\obs,\hidS{\sampleIdx})}{\pmfRec(\hidS{\sampleIdx}|\obs)} $. If $\pmfGen(\hid, \obs) / \pmfRec(\hid | \obs)$ is bounded, then it follows from the strong law of large numbers that $M_k$ converges to $\expect_{\pmfRec(\hidS{\sampleIdx}|\obs)} \left[ \frac{\pmfGen(\obs,\hidS{\sampleIdx})}{\pmfRec(\hidS{\sampleIdx}|\obs)}  \right] = \pmfGen(\obs)$ almost surely. Hence $\isBoundK{\numSamples}=\expect \log [M_k]$ converges to $\log \pmfGen(\obs)$ as $\numSamples\to \infty$.
\end{enumerate}

\section*{Appendix B}
\label{app:is_variance}

It is well known that the variance of an unnormalized importance sampling based estimator can be extremely large, or even infinite, if the proposal distribution is not well matched to the target distribution. Here we argue that the Monte Carlo estimator of $\isBoundK{\numSamples}$, described in Section~\ref{sec:iwae}, does not suffer from large variance. More precisely, we bound the mean absolute deviation (MAD). While this does not directly bound the variance, it would be surprising if an estimator had small MAD yet extremely large variance.

Suppose we have a strictly positive unbiased estimator $\hat{Z}$ of a positive quantity $Z$, and we wish to use $\log \hat{Z}$ as an estimator of $\log Z$. By Jensen's inequality, this is a biased estimator, i.e.~$\expect[\log \hat{Z}] \leq \log Z$. Denote the bias as $\delta = \log Z - \expect[\log \hat{Z}]$. We start with the observation that $\log \hat{Z}$ is unlikely to overestimate $\log Z$ by very much, as can be shown with Markov's Inequality:
\begin{equation}
{\rm Pr}(\log \hat{Z} > \log Z + b) \leq e^{-b}. \label{eqn:markov-bound}
\end{equation}

Let $(X)_+$ denote $\max(X, 0)$. We now use the above facts to bound the MAD:
\begin{align}
\expect \left[ \left|\log \hat{Z} - \expect[\log \hat{Z}] \right| \right]
&= 2 \expect \left[ \left(\log \hat{Z} - \expect[\log \hat{Z}] \right)_+ \right] \label{eqn:mad-step} \\
&= 2 \expect \left[ \left(\log \hat{Z} - \log Z + \log Z - \expect[\log \hat{Z}] \right)_+ \right] \\
&\leq 2 \expect \left[ \left(\log \hat{Z} - \log Z \right)_+ + \left( \log Z - \expect[\log \hat{Z}] \right)_+ \right] \\
&= 2 \expect \left[ \left(\log \hat{Z} - \log Z \right)_+ \right] + 2 \delta \\
&= 2 \int_0^\infty {\rm Pr} \left( \log \hat{Z} - \log Z > t \right) \mathrm{d}t + 2 \delta \label{eqn:expectation-step} \\
&\leq 2 \int_0^\infty e^{-t} \mathrm{d}t + 2 \delta \label{eqn:bound-step} \\
&= 2 + 2 \delta
\end{align}
Here, (\ref{eqn:mad-step}) is a general formula for the MAD, (\ref{eqn:expectation-step}) uses the formula $\expect[Y] = \int_0^\infty \mathrm{Pr}(Y > t)\, \mathrm{d}t$ for a nonnegative random variable $Y$, and (\ref{eqn:bound-step}) applies the bound (\ref{eqn:markov-bound}). Hence, the MAD is bounded by $2 + 2 \delta$. In the context of IWAE, $\delta$ corresponds to the gap between $\isBoundK{\numSamples}$ and $\log \pmfGen(\obs)$.

\section*{Appendix C}
\label{app:visualizations}

\subsection*{Network architectures}
Here is a summary of the network architectures used in the experiments:

 \begin{tikzpicture}
 \node at (0, 0.3) {$\pmfRec(\hidLayer{1}|\obs)=\normal(\hidLayer{1}|\recMeanLayer{1},\mathrm{diag}(\recStdLayer{1}))$} ;
  \matrix (m) [matrix of math nodes,row sep=0em,column sep=4em,minimum width=2em,nodes={draw}] at (7, 0)
  {
     \obs & \mbox{200d} & \mbox{200d} & \recMeanLayer{1} \\
       &                  &                  & \recStdLayer{1} \\};
  \path[-stealth]
    (m-1-1) edge node [above] {lin+$\tanh$} (m-1-2) 
    (m-1-2) edge node [above] {lin+$\tanh$} (m-1-3)
    (m-1-3) edge node [above] {lin}         (m-1-4)
	        edge node [below] {lin+$\exp$}  (m-2-4);
\end{tikzpicture} 
\begin{tikzpicture}
 \node at (0, 0.3) {$\pmfRec(\hidLayer{2}|\hidLayer{1})=\normal(\hidLayer{2}|\recMeanLayer{2},\mathrm{diag}(\recStdLayer{2}))$} ;
  \matrix (m) [matrix of math nodes,row sep=0em,column sep=4em,minimum width=2em,nodes={draw}] at (7, 0)
  {
     \hidLayer{1} & \mbox{100d} & \mbox{100d} & \recMeanLayer{2} \\
       &                  &                  & \recStdLayer{2} \\};
  \path[-stealth]
    (m-1-1) edge node [above] {lin+$\tanh$} (m-1-2) 
    (m-1-2) edge node [above] {lin+$\tanh$} (m-1-3)
    (m-1-3) edge node [above] {lin}         (m-1-4)
	        edge node [below] {lin+$\exp$}  (m-2-4);
\end{tikzpicture} 
\begin{tikzpicture}
 \node at (0, 0.3) {$\pmfGen(\hidLayer{1}|\hidLayer{2})=\normal(\hidLayer{1}|\genMeanLayer{1},\mathrm{diag}(\genStdLayer{1}))$} ;
  \matrix (m) [matrix of math nodes,row sep=0em,column sep=4em,minimum width=2em,nodes={draw}] at (7, 0)
  {
     \hidLayer{2} & \mbox{100d} & \mbox{100d} & \genMeanLayer{1} \\
       &                  &                  & \genStdLayer{1} \\};
  \path[-stealth]
    (m-1-1) edge node [above] {lin+$\tanh$} (m-1-2) 
    (m-1-2) edge node [above] {lin+$\tanh$} (m-1-3)
    (m-1-3) edge node [above] {lin}         (m-1-4)
	        edge node [below] {lin+$\exp$}  (m-2-4);
\end{tikzpicture} 
\begin{tikzpicture}
 \node at (0, 0) {$\pmfGen(\obs|\hidLayer{1})=\mathrm{Bernoulli}(\obs|\genMeanLayer{0})$} ;
  \matrix (m) [matrix of math nodes,row sep=0em,column sep=4em,minimum width=2em,nodes={draw}] at (7.25, 0)
  {
     \hidLayer{1} & \mbox{200d} & \mbox{200d} & \genMeanLayer{0} \\};
  \path[-stealth]
    (m-1-1) edge node [above] {lin+$\tanh$} (m-1-2) 
    (m-1-2) edge node [above] {lin+$\tanh$} (m-1-3)
    (m-1-3) edge node [above] {lin+sigm}         (m-1-4);
\end{tikzpicture} 


\subsection*{Distribution of activity statistic}

In Section~\ref{sec:latent}, we defined the activity statistic $A_\latentUnit = \mathrm{Cov}_{\obs} \left( \expect_{\latentUnit \sim \pmfRec(\latentUnit | \obs)} [\latentUnit] \right)$, and chose a threshold of $10^{-2}$ for determining if a unit is active. One justification for this is that the distribution of this statistic consisted of two widely separated modes in every case we looked at. Here is the histogram of $\log A_\latentUnit$ for a VAE with one stochastic layer:
\begin{center}
\includegraphics[width=0.3\textwidth]{\figuresdir{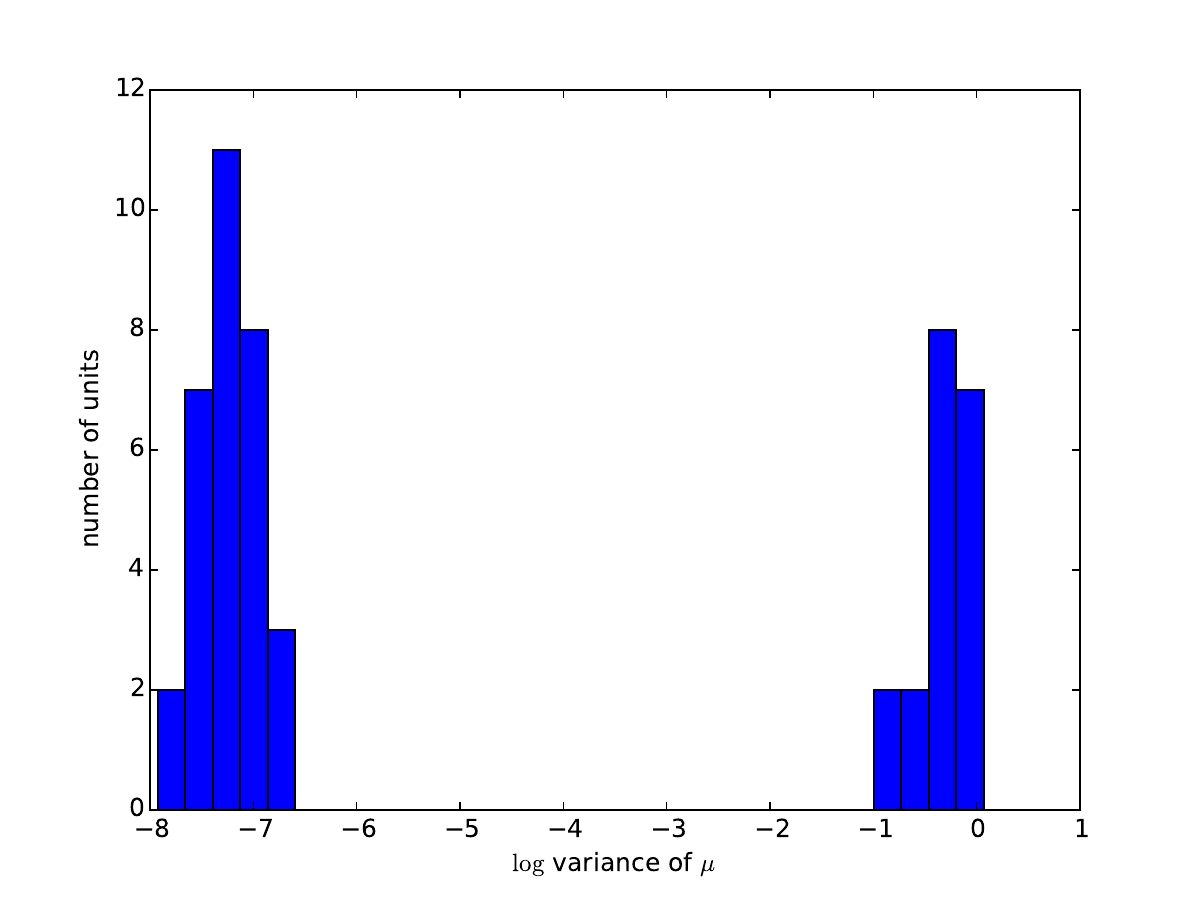}}
\end{center}

\subsection*{Visualization of posterior distributions}

We show some examples of true and approximate posteriors for VAE and IWAE models trained with two latent dimensions. Heat maps show true posterior distributions for 6 training examples, and the pictures in the bottom row show the examples and their reconstruction from samples from $\pmfRec(\hid | \obs)$.  {\bf Left:} VAE. {\bf Middle:} IWAE, with $\numSamples = 5$. {\bf Right:} IWAE, with $\numSamples = 50$. The IWAE prefers less regular posteriors and more spread out posterior predictions.

\includegraphics[width=0.2\textwidth]{\figuresdir{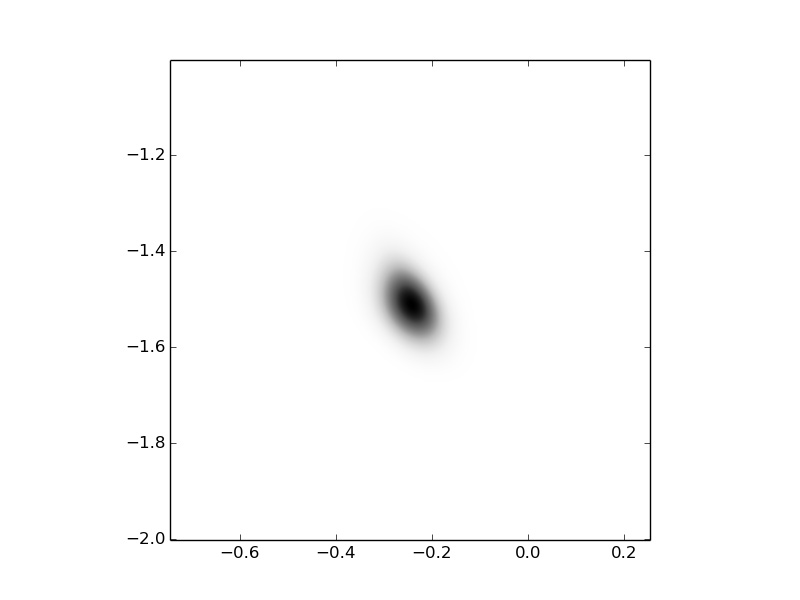}} \hspace{2ex} \includegraphics[width=0.2\textwidth]{\figuresdir{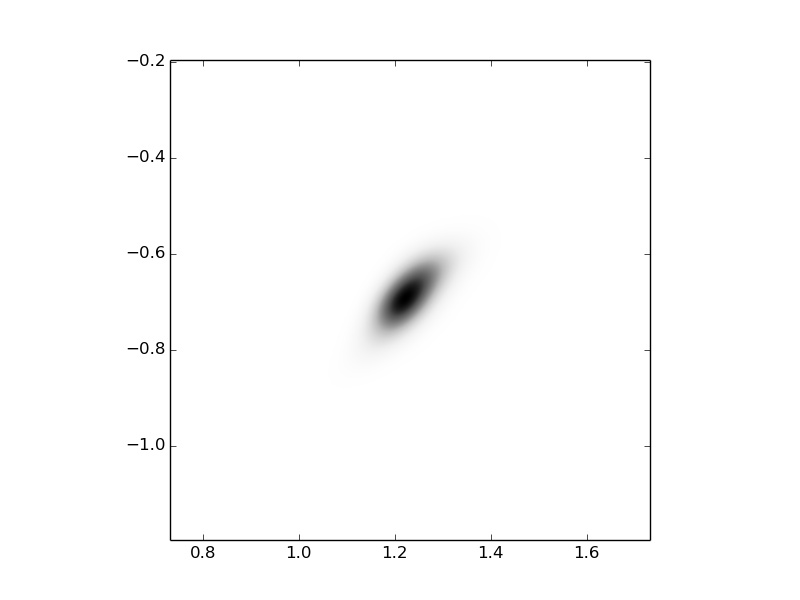}} \hspace{2ex}\includegraphics[width=0.2\textwidth]{\figuresdir{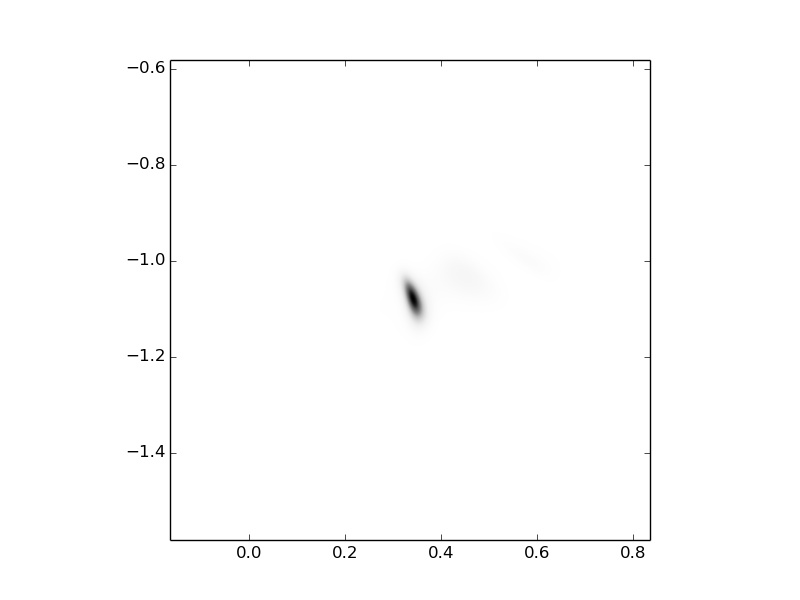}} \\
\includegraphics[width=0.2\textwidth]{\figuresdir{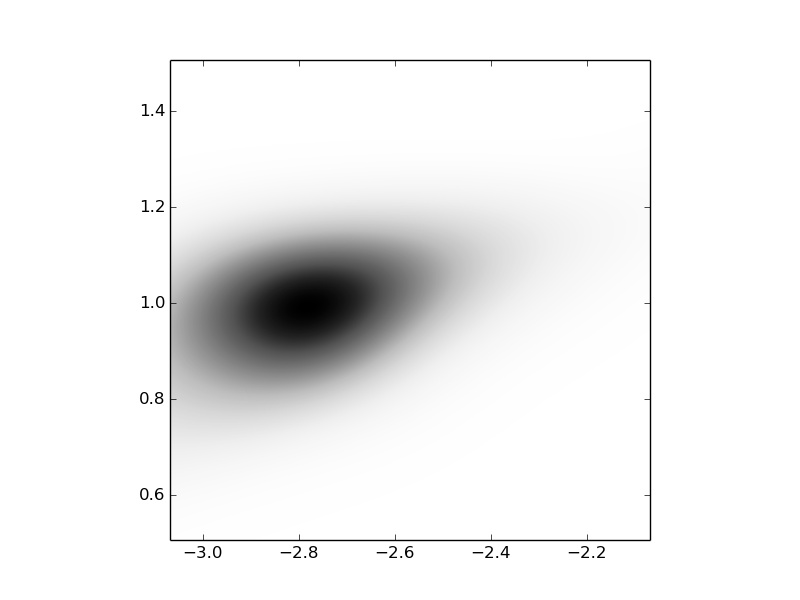}} \hspace{2ex} \includegraphics[width=0.2\textwidth]{\figuresdir{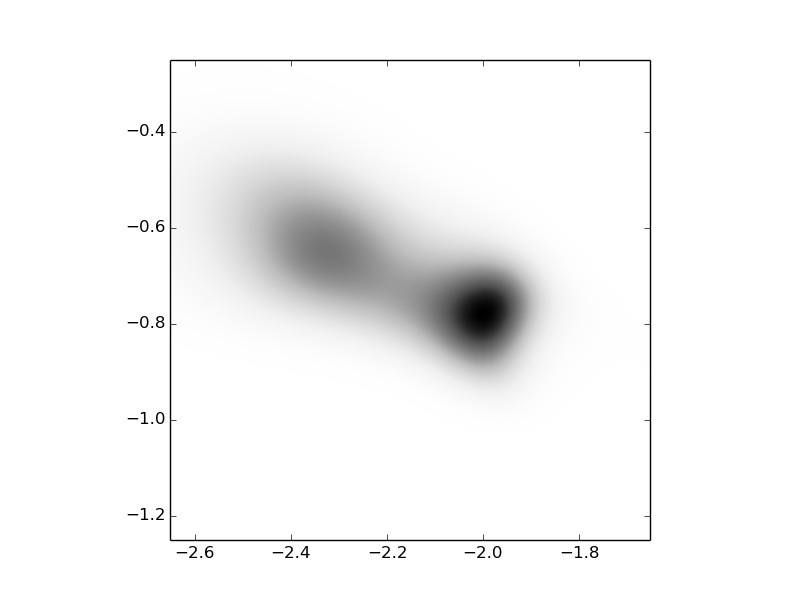}} \hspace{2ex} \includegraphics[width=0.2\textwidth]{\figuresdir{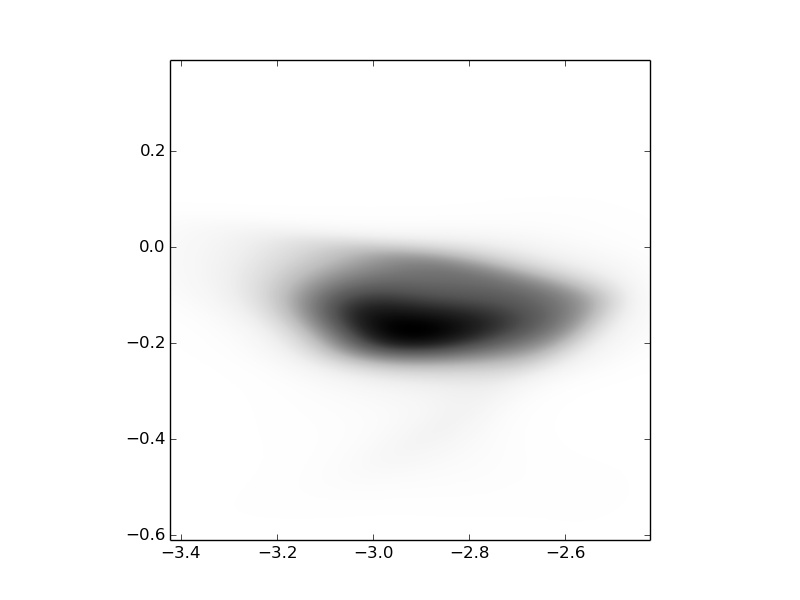}} \\
\includegraphics[width=0.2\textwidth]{\figuresdir{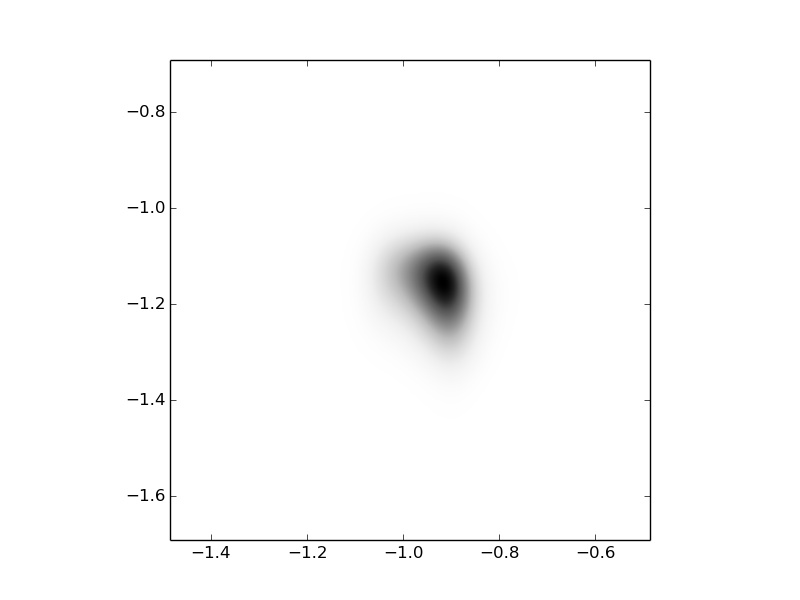}} \hspace{2ex} \includegraphics[width=0.2\textwidth]{\figuresdir{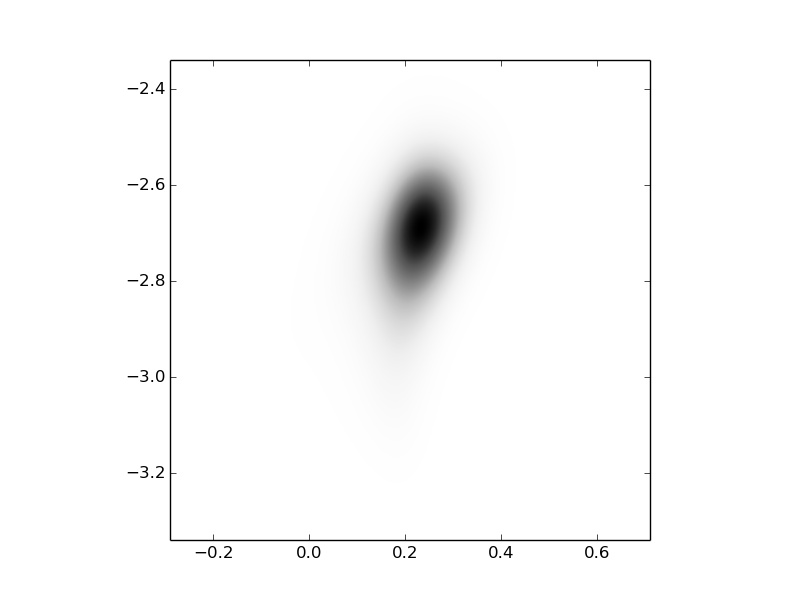}} \hspace{2ex} \includegraphics[width=0.2\textwidth]{\figuresdir{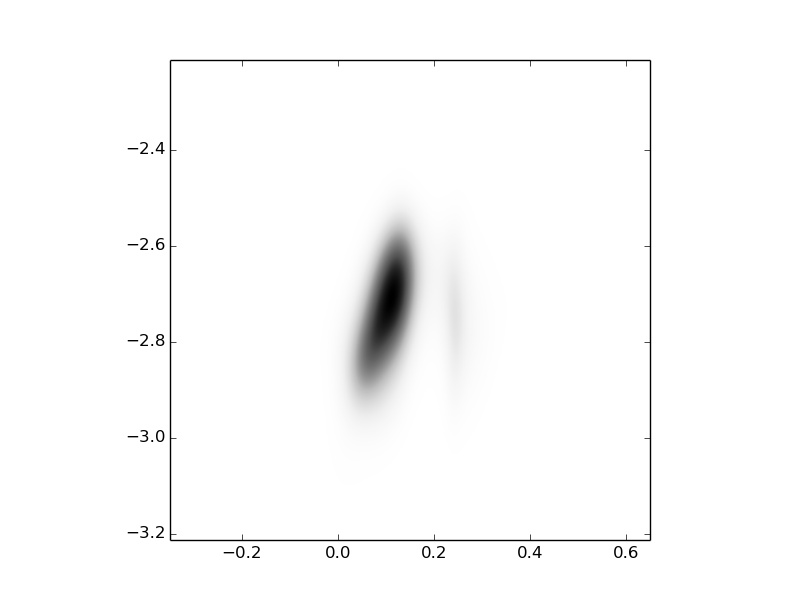}} \\
\includegraphics[width=0.2\textwidth]{\figuresdir{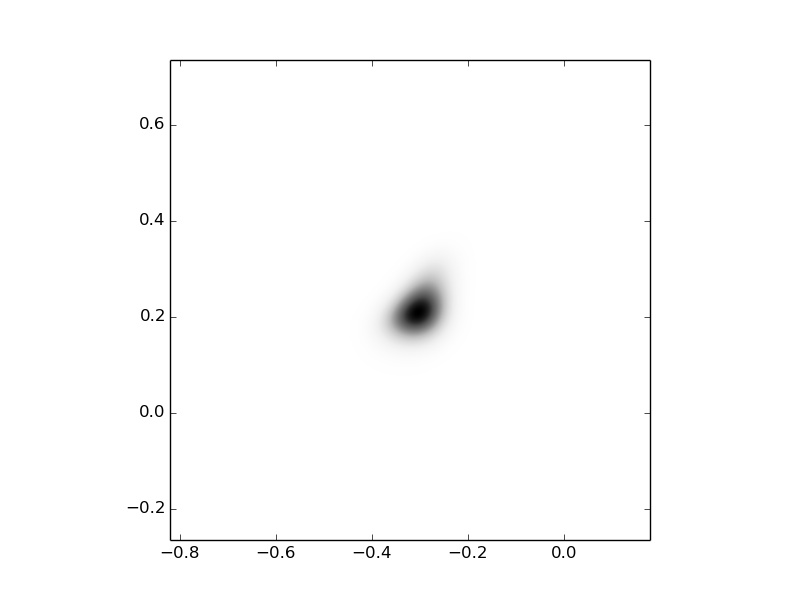}} \hspace{2ex} \includegraphics[width=0.2\textwidth]{\figuresdir{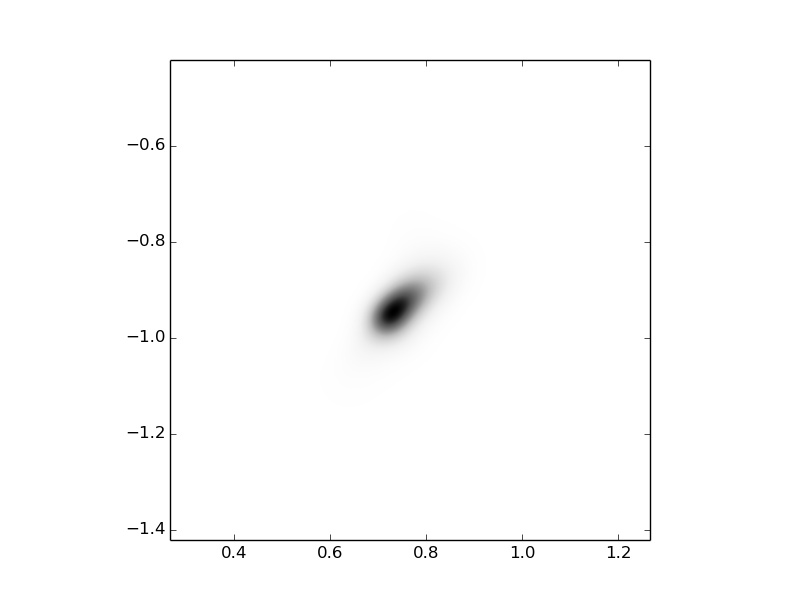}} \hspace{2ex} \includegraphics[width=0.2\textwidth]{\figuresdir{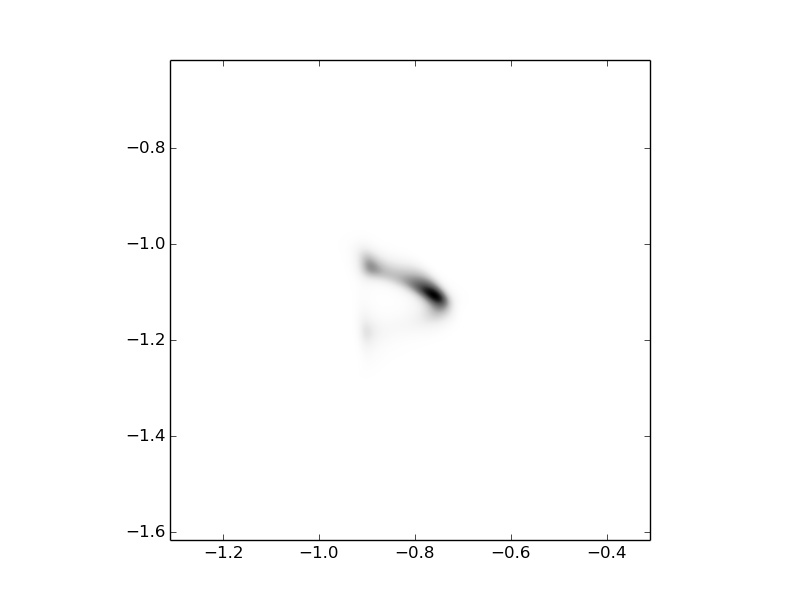}} \\
\includegraphics[width=0.2\textwidth]{\figuresdir{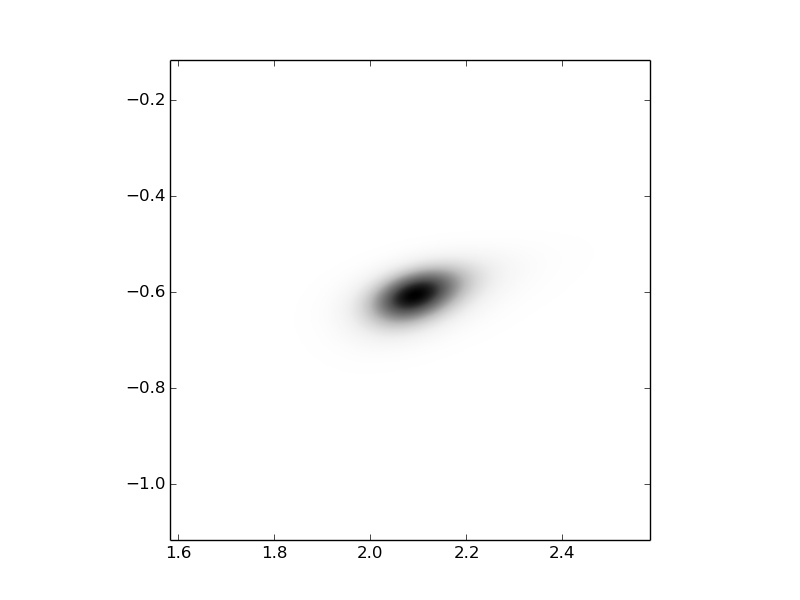}} \hspace{2ex} \includegraphics[width=0.2\textwidth]{\figuresdir{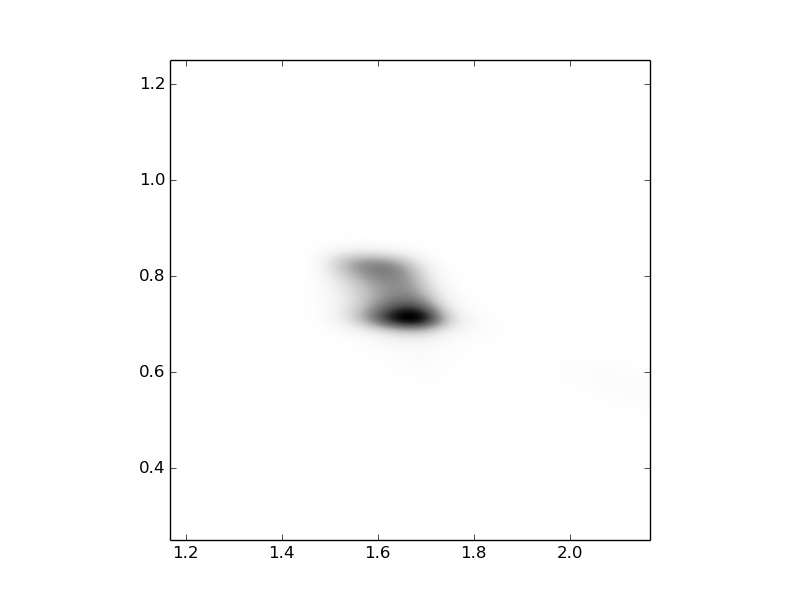}} \hspace{2ex} \includegraphics[width=0.2\textwidth]{\figuresdir{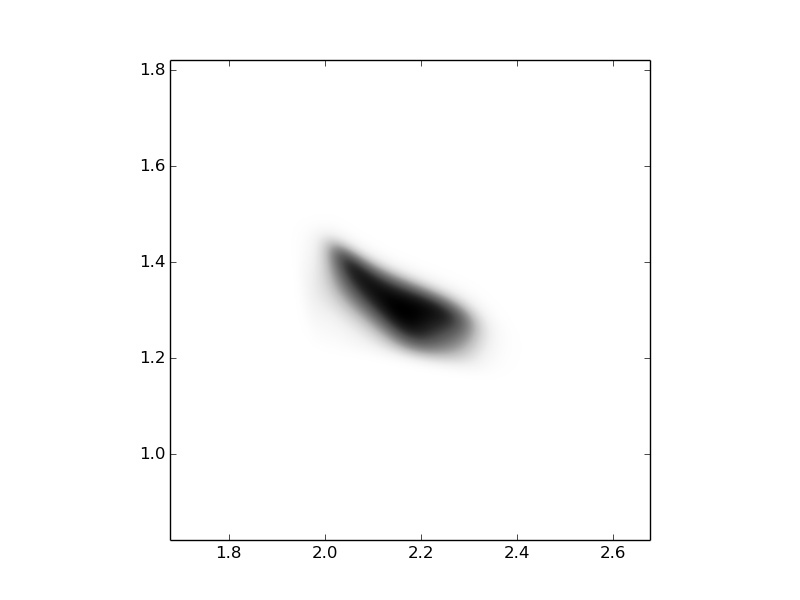}} \\
\includegraphics[width=0.2\textwidth]{\figuresdir{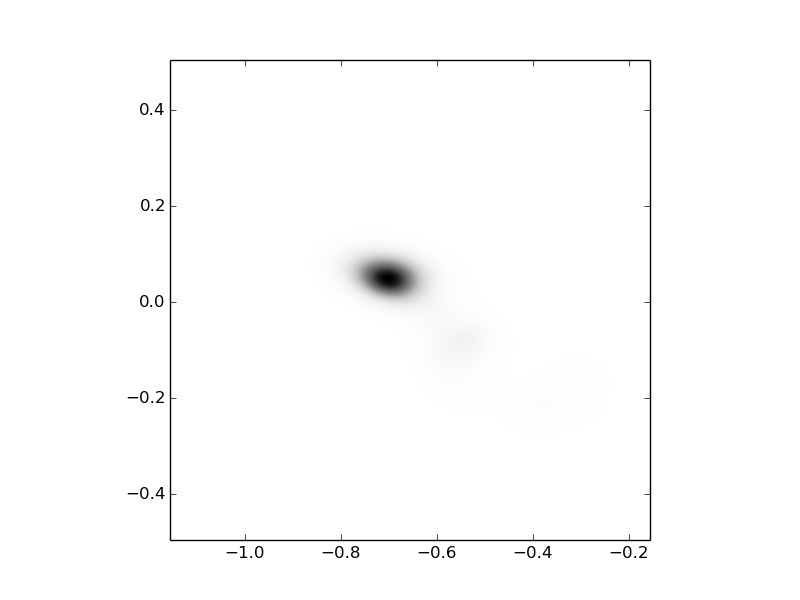}} \hspace{2ex} \includegraphics[width=0.2\textwidth]{\figuresdir{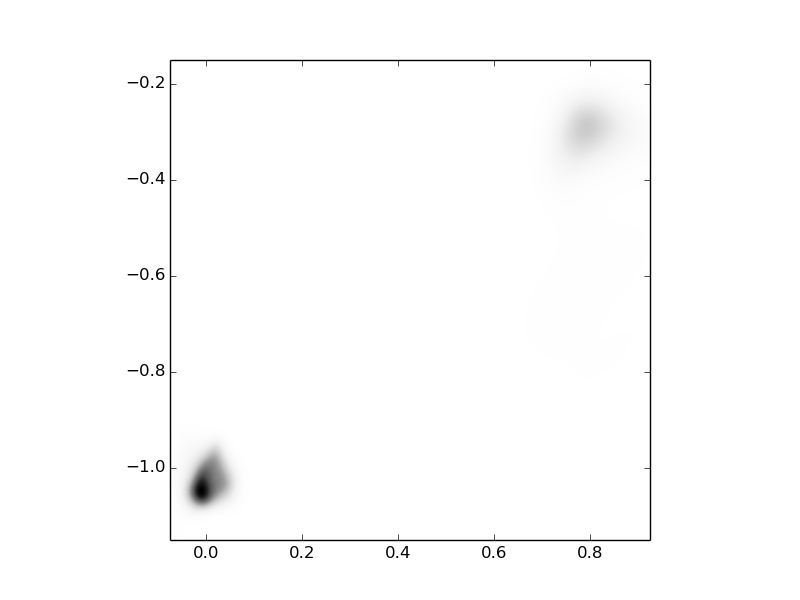}} \hspace{2ex} \includegraphics[width=0.2\textwidth]{\figuresdir{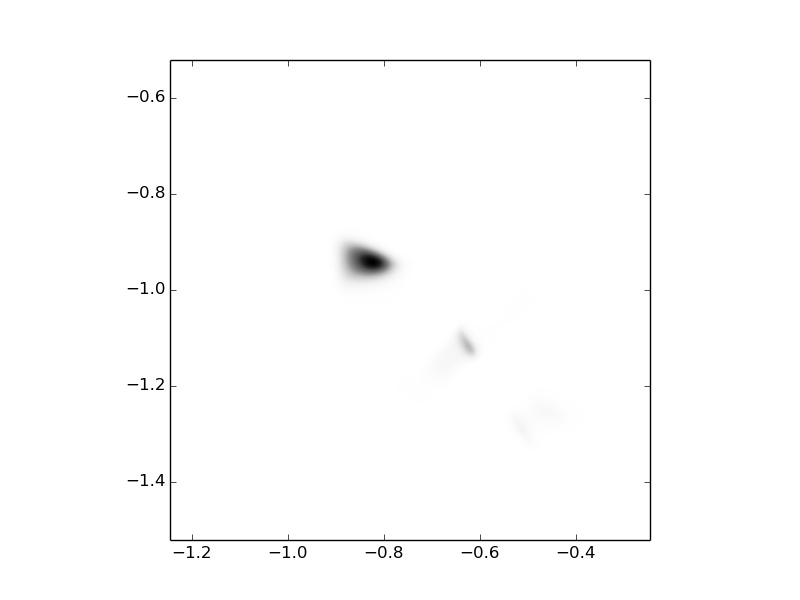}} \\
\includegraphics[width=0.2\textwidth]{\figuresdir{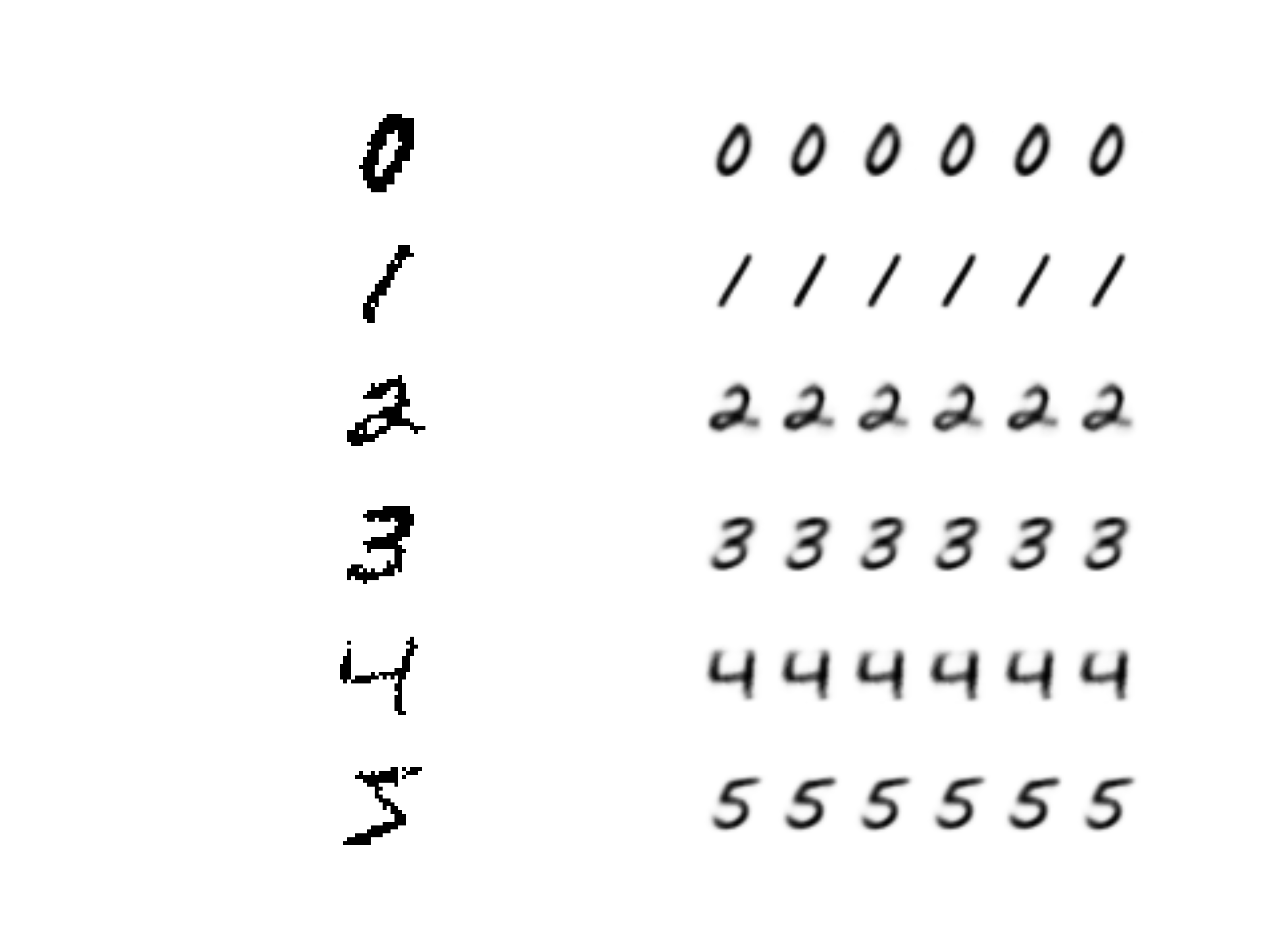}} \hspace{2ex} \includegraphics[width=0.2\textwidth]{\figuresdir{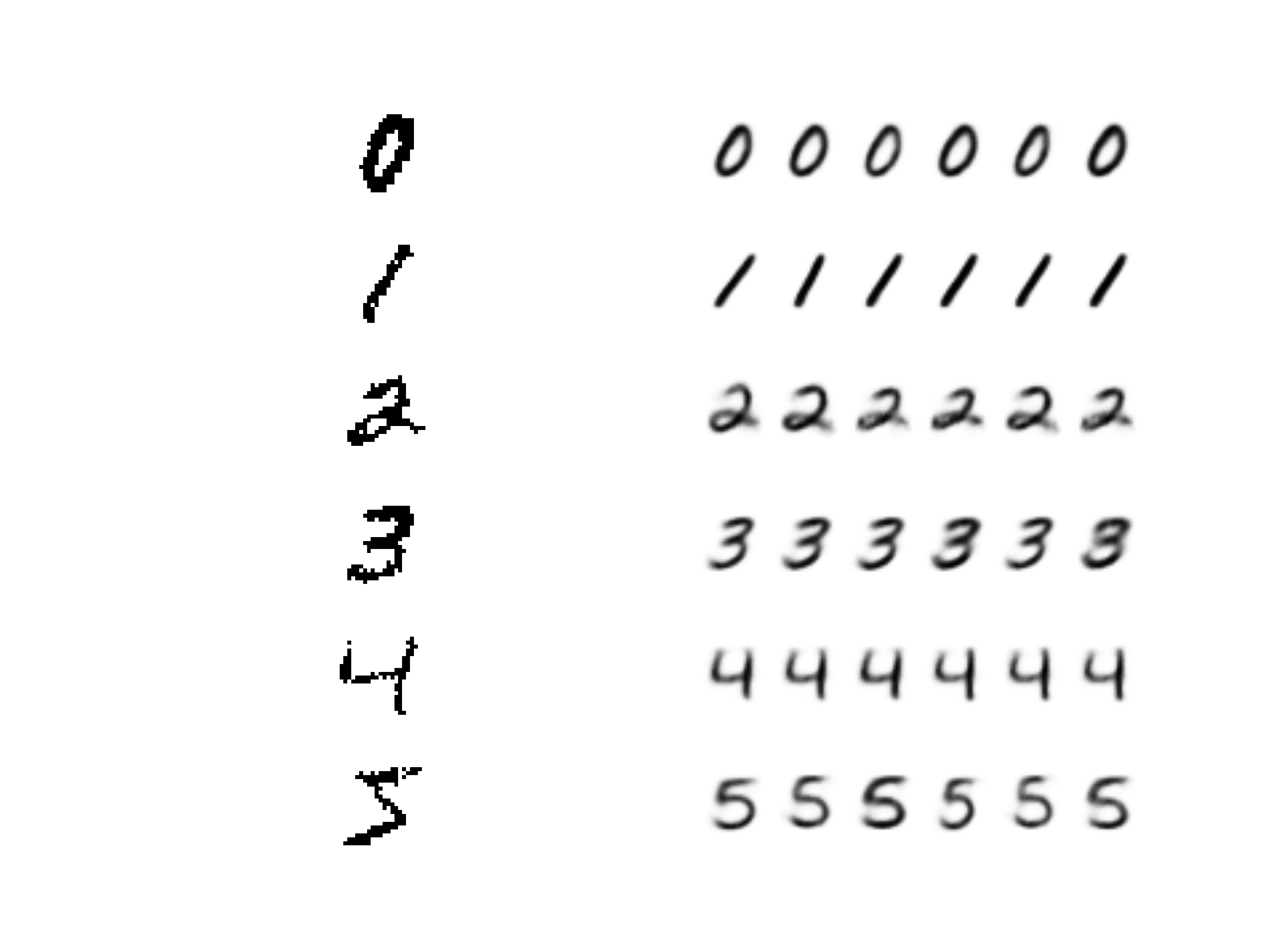}} \hspace{2ex} \includegraphics[width=0.2\textwidth]{\figuresdir{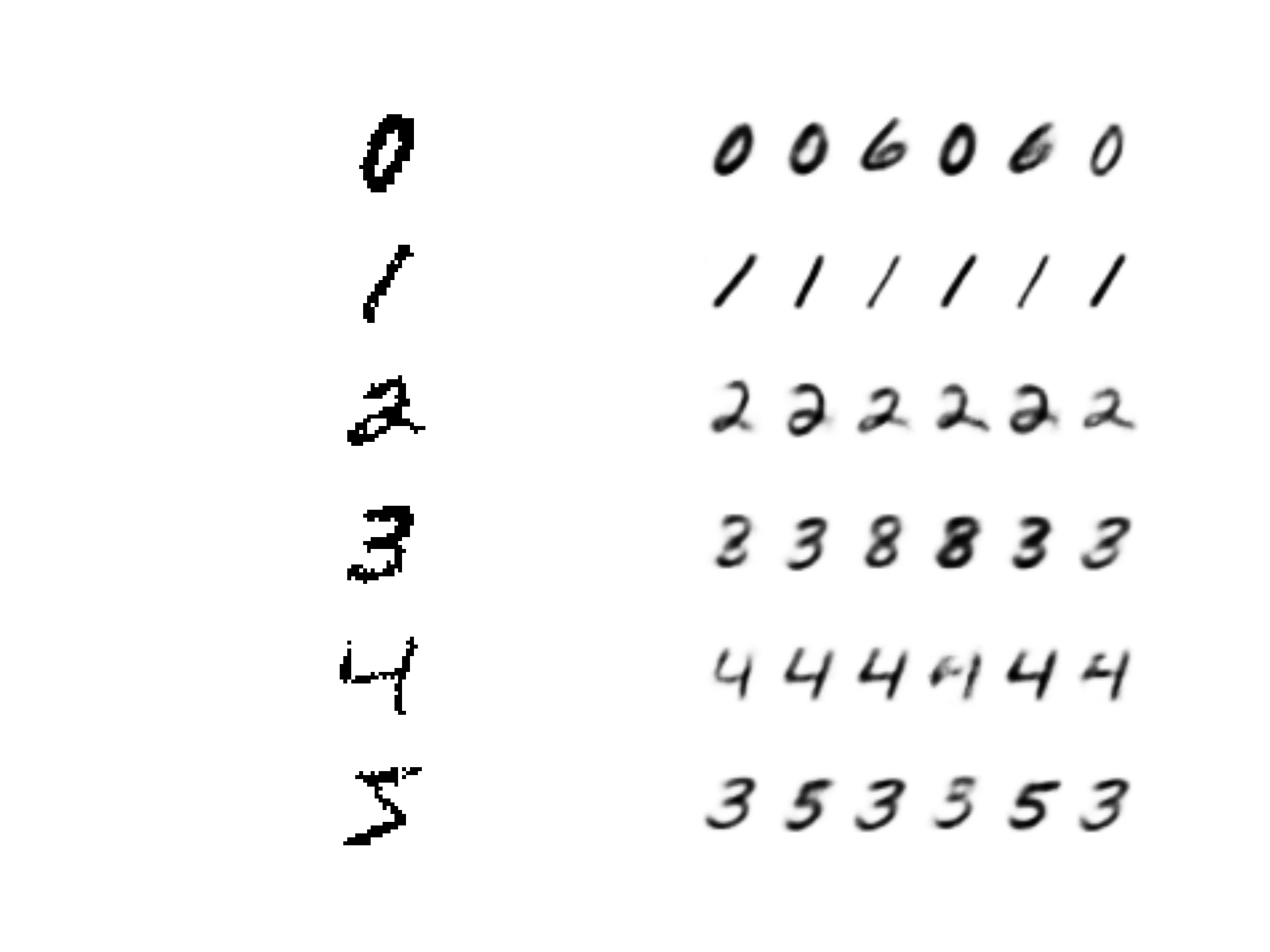}}

\section*{Appendix D}
\label{app:binarization}

\subsection*{Results for a fixed MNIST binarization}

Several previous works have used a fixed binarization of the MNIST dataset defined by \cite{binarization}. We repeated our experiments training the models on the 50000 examples from the training dataset, and evaluating them on the 10000 examples from the test dataset. Otherwise we used the same training procedure and hyperparameters as in the experiments in the main part of the paper. The results in table \ref{tbl:results_bin_fix} indicate that the conclusions about the relative merits of VAEs and IWAEs are unchanged in the new experimental setup. In this setup we noticed significantly larger amounts of overfitting.

\vspace{2ex}

\begin{table}[htbp]
\vspace{-0.3in}
\begin{center}
\begin{tabular}{@{}ll cc cc cc cc cc cc@{}}

              &     & \multicolumn{2}{c}{VAE} & \multicolumn{2}{c}{IWAE} \\
\cmidrule(r){3-4} \cmidrule(l){5-6}  
\shortstack{\# stoch. \\ layers} & $\numSamples$ & NLL               & \shortstack{active \\ units}             & NLL               & \shortstack{active \\ units}   \\ 
\cmidrule(lr){1-1} \cmidrule(lr){2-2} \cmidrule(lr){3-3}\cmidrule(lr){4-4}\cmidrule(lr){5-5}\cmidrule(lr){6-6} \cmidrule(lr){7-7}      \midrule
1             & 1   & 88.71 &  19  & 88.71 &  19   \\
              & 5   & 88.83 &  19  & 87.63 &  22   \\
              & 50  & 89.05 &  20  & 87.10 &  24   \\ \midrule
2             & 1   & 88.08 &  16+5 & 88.08 &  16+5 \\
              & 5   & 87.63 &  17+5 & 86.17&  21+5 \\
              & 50  & 87.86 &  17+6 & 85.32&  24+7 \\ \bottomrule
\end{tabular}
\end{center}
\vspace{-0.1in}
\caption{\small Results on density estimation and the number of active latent dimensions on the fixed binarization MNIST dataset. For models with two latent layers, ``$k_1+k_2$'' denotes $k_1$ active units in the first layer and $k_2$ in the second layer. The generative performance of IWAEs improved with increasing $\numSamples$, while that of VAEs benefitted only slightly. Two-layer models achieved better generative performance than one-layer models. }
\label{tbl:results_bin_fix}
\end{table}

\section*{Appendix E}
\label{app:samples}

\subsection*{Samples}
\vspace{2ex}
\begin{table}[htbp]
\vspace{-0.3in}
\begin{center}
\begin{tabular}[t]{@{}ccc@{}}
 \includegraphics[width=0.25\textwidth]{\figuresdir{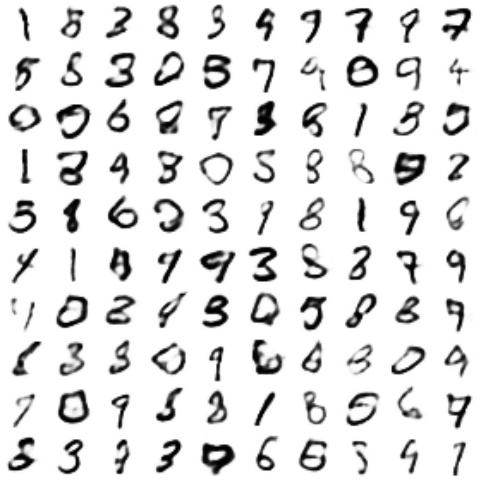}} & \hspace{3ex} &
\includegraphics[width=0.25\textwidth]{\figuresdir{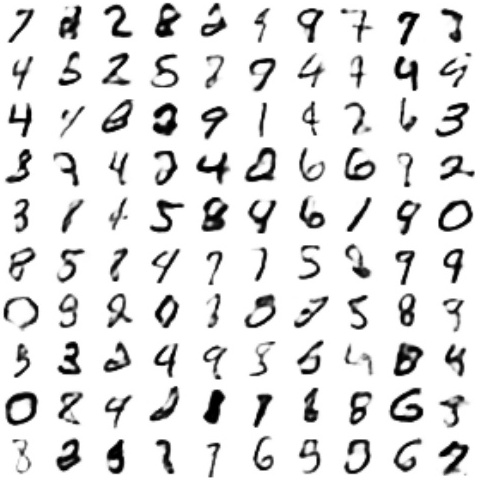}} \\
& & \\
 \includegraphics[width=0.25\textwidth]{\figuresdir{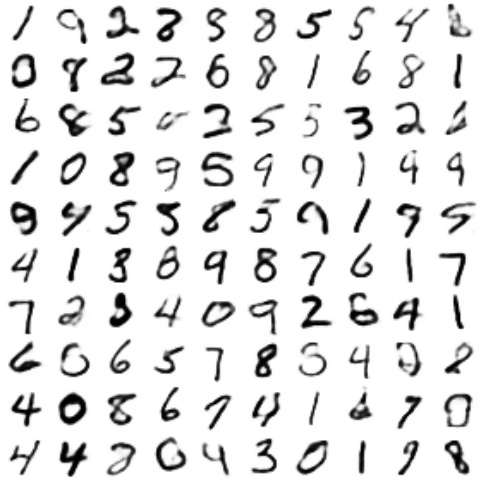}} & \hspace{3ex} &
\includegraphics[width=0.25\textwidth]{\figuresdir{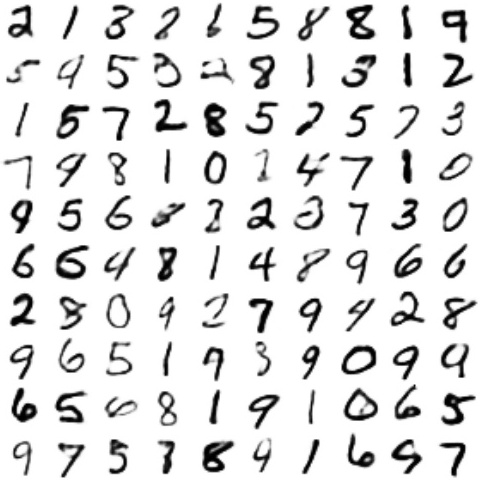}} \\
\end{tabular}
\end{center}
\vspace{-0.1in}
\caption{\small Random samples from VAE (left column) and IWAE with $k=50$ (right column) models. Row 1: models with one stochastic layer. Row 2: models with two stochastic layers. Samples are represented as the means of the corresponding Bernoulli distributions.}
\label{tbl:samples}
\end{table}

\end{document}